\documentclass{article}

\usepackage{microtype}
\usepackage{graphicx}
\usepackage{subcaption}
\usepackage{booktabs} 
\usepackage{newclude} 

\usepackage[preprint]{icml2026}
\usepackage{hyperref}
\usepackage[table,xcdraw]{xcolor}

\usepackage{icml2026}

\usepackage{hyperref}
\hypersetup{
  bookmarks=true,
  bookmarksnumbered=true,
  bookmarksopen=true,
  bookmarksopenlevel=2,
  pdfstartview=FitH
}

\usepackage{amsmath}
\usepackage{amssymb}
\usepackage{mathtools}
\usepackage{amsthm}
\usepackage{tikz}
\usetikzlibrary{positioning, shapes, arrows.meta, fit, backgrounds}
\usepackage{algorithm}
\usepackage{algorithmic}
\renewcommand{\algorithmicrequire}{\textbf{Input:}}
\renewcommand{\algorithmicensure}{\textbf{Output:}}
\usepackage[capitalize,noabbrev]{cleveref}

\usepackage{multicol}
\usepackage{caption}
\usepackage{subcaption}

\theoremstyle{plain}
\newtheorem{theorem}{Theorem}[section]

\theoremstyle{definition}
\newtheorem{definition}[theorem]{Definition}

\theoremstyle{remark}
\newtheorem{remark}[theorem]{Remark}

\usepackage[textsize=tiny]{todonotes}

\icmltitlerunning{Provable Model Provenance Set  for Large Language Models}

\begin{document}

\twocolumn[
  \icmltitle{Provable Model Provenance Set for Large Language Models}



  \icmlsetsymbol{equal}{*}

  \begin{icmlauthorlist}
    \icmlauthor{Xiaoqi Qiu}{equal,yyy}
    \icmlauthor{Hao Zeng}{equal,yyy}
    \icmlauthor{Zhiyu Hou}{yyy}
    \icmlauthor{Hongxin Wei}{yyy}
   
  
  \end{icmlauthorlist}

  \icmlaffiliation{yyy}{Department of Statistics and Data Science, Southern University of Science and Technology, Shenzhen, China}

  \icmlcorrespondingauthor{Hongxin Wei}{weihx@sustech.edu.cn}

  \icmlkeywords{Model Provenance Testing, Large Language Model, Intellectual Property Protection}

  \vskip 0.3in
]



\printAffiliationsAndNotice{\icmlEqualContribution}

\begin{abstract}
The growing prevalence of unauthorized model usage and misattribution has increased the need for reliable model provenance analysis.
However, existing methods largely rely on heuristic fingerprint-matching rules that lack provable error control and often overlook the existence of multiple sources, leaving the reliability of their provenance claims unverified. 
In this work, we first formalize the model provenance problem with provable guarantees, requiring rigorous coverage of all true provenances at a prescribed confidence level.
Then, we propose the \emph{Model Provenance Set} (MPS), which employs a sequential test-and-exclusion procedure to adaptively construct a small set satisfying the guarantee.
The key idea of MPS is to test the significance of provenance existence within a candidate pool, thereby establishing a provable asymptotic guarantee at a user-specific confidence level.
Extensive experiments demonstrate that MPS effectively achieves target provenance coverage while strictly limiting the inclusion of unrelated models, and further reveal its potential for practical provenance analysis in attribution and auditing tasks. 



\end{abstract}

\section{Introduction}

Large language models (LLMs) are rapidly advancing and widely deployed across diverse applications~\cite{zhang2025llms}. 
The expansion of open-source ecosystems and API-based deployment has intensified concerns over attribution and unauthorized reuse, highlighting the importance of protecting LLM intellectual property (IP)~\cite{yao2024survey}. 
For example, prior reports~\cite{yoon2025intrinsic, openbmb2023minicpmv_issue196} have raised questions about the provenance of certain purportedly independent LLMs, with evidence suggesting that they are potentially fine-tuned from existing base models.
Such concerns motivate the need for reliable model provenance analysis to determine whether a model is derived from other released base models~\cite{shi2025knowledge}.

In the literature, existing research has revealed that models sharing a provenance relationship tend to exhibit consistent behavioral or structural patterns~\cite{nasery2025robust,zhang2025reef}. 
Accordingly, most methods infer the provenance by comparing signatures extracted from model behaviors or internal representations, typically via similarity-based heuristic rules such as thresholding~\cite{Nikolic2025Model,hu2025fingerprinting} or trained classifiers~\cite{zeng2024huref,wu2025llmdnatracingmodel}.
However, these methods lack provable error control, leading to unbounded false decisions that compromise the credibility of their provenance claims. 
Furthermore, they simply consider a single direct source and verify provenance through a winner-takes-all determination, overlooking the complex topologies where models may inherit multiple provenance via fine-tuning chains or merging~\cite{wu2025llmdnatracingmodel}. 
Collectively, these limitations necessitate a principled framework that provides provable guarantees for multi-model provenance, ensuring reliable and credible determinations in complex derivation lineages.

In this paper, we first formalize model provenance as a model-set identification problem with provable error control, establishing the theoretical foundation for controlling false-attribution rates at desired levels in multi-source scenarios. 
To this end, we propose \emph{Model Provenance Set} (MPS), a novel method that leverages a principled statistical hypothesis-testing framework to identify all true related models. 
The core idea of MPS is to statistically test the existence of a potential provenance model that exhibits significantly higher similarity to a target model than expected. 
In particular, MPS constructs a small yet valid provenance set through a test-and-exclusion procedure that iteratively checks for the existence of significantly similar models and excludes them until no distinguishably similar models remain. 
Accordingly, we establish the asymptotic theory of MPS to provide a provable coverage guarantee, validating the resulting set contains all true provenance with a pre-defined confidence level. In addition, MPS is agnostic to the choice of model similarity scores, enabling seamless integration with diverse fingerprinting techniques.

We evaluate MPS on a real-world benchmark of 455 LLMs from Hugging Face~\cite{HuggingFace2023}, spanning model sizes from 135M to 3B and lineage depth up to three. 
Our experiments demonstrate that MPS reliably identifies true provenance models while excluding unrelated ones. 
For example, with two true related models among 50 candidates, MPS achieves a coverage rate of 96\% with an average predicted set size of 2.1 at $\alpha$ = 0.05~(Figure~\ref{fig:main_res}), 
indicating its effective control of false provenances.
Beyond statistical guarantee analysis, we evaluate MPS across diverse provenance tasks, including model attribution, unauthorized derivation detection, and non-infringement claim assessment. 
The results show that MPS effectively distinguishes true provenances from irrelevant models with superior accuracy compared to empirical baselines, while providing rigorous significance metrics that serve as verifiable evidence for defending models against false infringement claims.
These findings underscore the potential of MPS as a practical and reliable tool for \textit{model IP protection} and \textit{auditing}.
Our code and data are available at \url{https://anonymous.4open.science/r/MPS-DFA7}.

We summarize our contributions as follows:
\begin{enumerate}
\item We formulate model provenance analysis as a principled model-set identification problem: identifying a provable provenance set that includes all true related models with at least a predefined confidence level.

\item We propose MPS to detect a compact provenance set with rigorous coverage guarantee via a sequential test-and-exclusion procedure.
Our method provides a provable asymptotic guarantee for its statistical validity while remaining fingerprint-agnostic.




\item Extensive experiments demonstrate that MPS consistently identifies all true provenance models with guaranteed probability while keeping the selected set compact, and enables reliable provenance analysis in diverse model attribution and auditing tasks.

\end{enumerate}




\begin{figure*}
    \centering
    \includegraphics[width=0.9\linewidth]{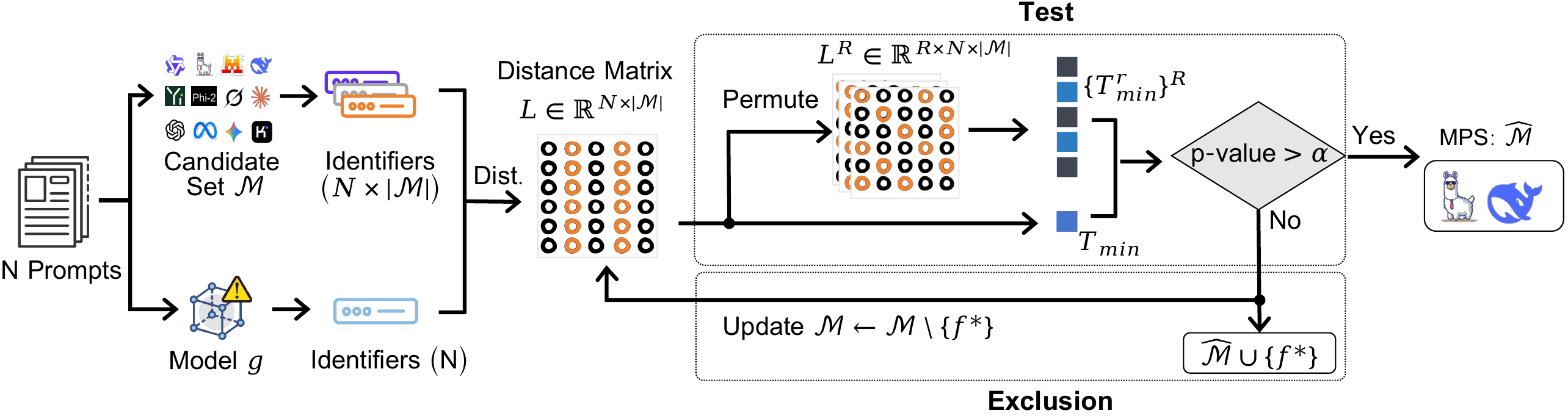}
    \caption{\textbf{Overview of the model provenance set framework.} 
Given a target model $g$ and candidate models $\mathcal{M}$, our MPS implements a sequential test-and-exclusion procedure to build a provenance set $\hat{\mathcal{M}}$ satisfying $Pr(\mathcal{M}^*\subseteq \hat{\mathcal{M}}) \ge 1-\alpha$, where $\mathcal{M}^*$ is the true provenance set. Orange circles in \textbf{$L$} indicate the distances between $g$ and true provenances, and $f^*$ is the most similar candidate model.}

    \label{fig:overview}
\end{figure*}

\section{Problem Formulation}
\label{sec:preliminary}
We first formulate the model provenance problem with theoretical guarantees. This provides a statistical foundation that moves beyond empirical heuristics to enable provable inferences. 
Such rigor is essential for grounding reliable provenance claims, yet it remains largely overlooked in prior literature; a detailed review is provided in Appendix~\ref{apd:related_work}. Below, we establish the necessary notations and definitions.
\paragraph{Provenance relationship.} 
Let $g$ be a target model and $\mathcal{M} = \{f_1, \dots, f_M\}$ be a set of candidate base models, which contains a mixture of LLMs, i.e., some are truly related to $g$, while others are not. We define the provenance relationship between two models as follows.

\begin{definition}[Provenance relationship]
\label{def:provenance}
A candidate model $f \in \mathcal{M}$ is said to be a \textit{provenance} model of the model $g$ if $g$ is obtained from $f$ through lighter customizations, including but not limited to fine-tuning, quantization, or architectural modifications (e.g., mixture-of-experts).
\end{definition}

\paragraph{Model similarity score.}
Typically, the provenance relationship is measured by model-unique fingerprint similarity over a query distribution $\mathcal{Q}$.
Given $N$ prompts, i.e., $X$ = $\{x_1,\dots,x_N\}\sim\mathcal{Q}$, let $f(x)$ denote the identifier of model $f$ to query $x$ (e.g., outputs or embeddings).
In this paper, we examine the deviation between the resulting traces of a candidate model $f_i\in \mathcal{M}$ and a target model $g$ over $X$, and define a sample-wise distance function $L_{i,t}$ to measure the dissimilarity score between $f_i(x_t)$ and $g(x_t)$ on prompt $x_t$.
\begin{equation}
    \label{eq:loss}
    L_{i,t} = 1 - s(f_i(x_t), g(x_t)),
\end{equation}
where $s(\cdot,\cdot) \in [0, 1]$ returns a constant similarity score. We then compute the average distance on the query space, i.e., $\mu_i = \mathbb{E}[L_{i,t}]$. A higher $\mu_i$ implies a lower similarity between model $f_i$ and the target model, and vice versa.

\paragraph{Model provenance set.} Among all the candidate models, we define the ones related to $g$ as ``true provenance set'':
\begin{definition}[True provenance set]
    \label{definition:tam}
   The subset of candidate models that hold a provenance relationship to the target model $g$ is the \textit{true provenance set}, denoted as $\mathcal{M}^*$.
\end{definition}

\begin{remark}
By definition, $\mathcal{M}^* \subseteq \mathcal{M}$, with no assumption that $\mathcal{M}^*$ is non-empty, i.e., $|\mathcal{M}^*| \ge 0$. 
When $\mathcal{M}^* = \emptyset$, all candidate models are unrelated to the model $g$.
When $|\mathcal{M}^*| > 1$, there exist multiple provenance models that typically  exhibit statistically indistinguishable dissimilarity to $g$, or show a progressively decreasing trend along the lineage, depending on the underlying derivation mechanisms.
\end{remark}

\paragraph{Objective.}
Our goal is to identify a minimal model provenance set that includes all true provenances with high probability. Formally, we seek to construct a subset $\hat{\mathcal{M}} \subseteq \mathcal{M}$ that solves the following optimization problem:
\begin{equation}
\label{eq:objective}
\min_{\hat{\mathcal{M}} \subseteq \mathcal{M}} |\hat{\mathcal{M}}| \quad \text{subject to} \quad \Pr(\mathcal{M}^* \subseteq \hat{\mathcal{M}}) \ge 1 - \alpha,
\end{equation}
where $\alpha \in (0,1)$ denotes the maximum allowable error rate of missing any true provenance models.

The goal addresses the dual requirements of reliability and efficiency for provenance analysis. Specifically, bounding the error rate provides the statistical safety that no true provenance is missed, while minimizing the set size ensures to effectively eliminate the irrelevant candidates. 
This formulation is valuable in model auditing or management scenarios, where provenance tools typically serve as a preliminary screening stage followed by more resource-intensive verification~\cite{nasery2025robust,Nikolic2025Model}, such as manual expert checking. 
Thus, achieving this objective helps alleviate the burden of such downstream verification.
To our knowledge, this work first introduces a formal framework with provable guarantees for model provenance.
\section{Methodology}
\label{sec:method}

Identifying provenance sets in practice is challenged by the absence of a clear similarity boundary to confirm provenance relationships.
Moreover, potential discrepancies in similarities among multiple true sources complicate comprehensive identification.
Inspired by the Model Confidence Set (MCS)~\citep{Hansen2011mcs}, which identifies a superior model set via significance testing,
we address these issues by designing a relative-distance-based test statistic to detect provenance
existence and introducing a sequential test-and-exclusion procedure that selects one most related model at each step.
Our overall framework is presented in Figure~\ref{fig:overview}, with detailed connection to MCS in Appendix~\ref{apd:mcs_connection}.



\subsection{The design of test statistic }
\label{sec:test_stat}
This subsection details how to build the test statistic. Let $L \in \mathbb{R}^{N \times |\mathcal{M}|}$ denote the distance matrix comprising the sample-wise dissimilarity of all $f_i \in \mathcal{M}$ with the target model $g$.
For any model $f_i$ in the current candidate set $\mathcal{M}$, we compute the relative dissimilarity score $d_{ij,t} = L_{i,t} - L_{j,t}$ and its average on the query space $\bar{d}_{ij} = \frac{1}{N} \sum_{t=1}^{N} d_{ij,t}$.
The relative deviation of model $f_i$ from ensemble-average distances to the target $g$ is defined as follows: 
\begin{equation}
\bar{d}_{i \cdot} = \frac{1}{M} \sum_{j \in \mathcal{M}} \bar{d}_{ij},
\end{equation}
where $M$ is the size of the candidate set. Note that $\bar{d}_{i\cdot}\in \mathbb{R}$ can be seen from the identity $\bar{d}_{i\cdot}=(\bar{L}_i-\bar{L}_{\cdot})$, where $\bar{L}_i \equiv  N^{-1}\sum_{t=1}^NL_{i,t}$ and $\bar{L}_{\cdot} \equiv  M^{-1}\sum_{i=1}^M \bar{L}_i$.
This leads to the studentized $t$-statistic:
\begin{equation}
\label{eq:t_stat}
t_{i} = \frac{\bar{d}_{i \cdot}}{\sqrt{\widehat{\mathrm{var}}(\bar{d}_{i \cdot})}},
\end{equation}
where $\widehat{\mathrm{var}}(\bar{d}_{i \cdot}) \equiv \frac{1}{N-1}\sum_{t=1}^N(d_{i\cdot ,t}-\bar{d}_{i\cdot } )^2 $  is the standard sample variance of the observations $\{d_{i,t}\}_{t=1}^N$.

Let $u_i \equiv \bar{d_i}$, the null hypothesis for the candidate model set $\mathcal{M}$ states that all candidates have equal expected distance relative to the target model $g$, formulated as:
\begin{equation}
\label{eq:null}
\mathrm{H}_{0, \mathcal{M}}: \mu_{i} = \mu_{j} \quad \forall f_i, f_j \in \mathcal{M}.
\end{equation}

We denote the alternative hypothesis, $u_i\ne u_j$ for some $f_i,f_j\in \mathcal{M}$, by $H_{A,\mathcal{M}}$. We test this using the $T_{\min}$ statistic:

\begin{equation}
\label{eq:tmin}
T_{\min, \mathcal{M}} = \min_{f_i \in \mathcal{M}} t_{i},
\end{equation}
which identifies the model most similar to the target (lowest relative distance).
$T_{\min}$ reflects the smallest relative deviation in distance to the target $g$ among all candidates.
Testing whether such a value is significantly small allows us to determine whether $\mathcal{M}$ contains models that are anomalously close to $g$, which are likely to be provenance models.

\subsection{Hypothesis testing using $t$-statistics}
\label{sec:permutation}
We assess the statistical significance of $T_{\min}$ via a $p$-value. 
Specifically, we use a permutation test to approximate the null distribution under the hypothesis of equal closeness to $g$. 
Under $\mathrm{H}_{0,\mathcal{M}}$, the discrepancy scores are exchangeable across candidate models; equivalently, all candidate-to-$g$ distances share the same statistical properties, and permuting these values does not affect the null hypothesis.


We describe the permutation procedure as follows.
For each prompt $x_t$, the score values in the vector $L_{\cdot, t} = (L_{1,t}, \dots, L_{M,t})$ are randomly permuted across all models.
This allows the formation of a null distribution by breaking systematic model differences, helping to check whether provenance signals are merely artifacts of random coincidence. Specifically, we recompute all $t$-statistics and obtain a new $T_{\min}^{(r)}$ after permuting the whole distance matrix.
Repeating this process $R$ times, we obtain a set of permuted statistics $\{T_{\min}^{(r)}\}_{r=1}^R$.
The $p$-value is computed as:
\begin{equation}
\label{eq:pvalue}
p_{\mathcal{M}} = \frac{1}{R} \sum_{r=1}^{R} \mathbf{1}\Big\{ T_{\min}^{(r)} \le T_{\min}^{\mathrm{obs}} \Big\},
\end{equation}
which quantifies the likelihood of $T_{\mathrm{min}}^{obs}$ under $\mathrm{H}_{0,\mathcal{M}}$ and is typically compared with a predetermined significance level $\alpha$ to determine the statistical significance of the result.

Following this criterion, we reject the $H_{0,\mathcal{M}}$ whenever $p_{\mathcal{M}} \le \alpha$, indicating that the observed minimal distance is statistically significant (i.e., too small to be explained by random chance).
Otherwise, when $p_{\mathcal{M}} > \alpha$, we fail to reject $\mathrm{H}_{0,\mathcal{M}}$ and conclude that no remaining model can be statistically distinguished as being closer to $g$. 



\begin{theorem}[Valid $p$-value]
\label{thm:pvalue}
For any model $f_i, f_j \in \mathcal{M}$, suppose that (a) prompts $\{x_t\}_{t=1}^N$ are sampled independently, (b) $\mathbb{E}[|d_{ij,t}|^{2+\delta}] < \infty$ for some $\delta > 0$, and (c) $\sigma^2_{ij} = \mathrm{Var}(d_{ij,t}) > 0$ whenever $\mu_i = \mu_j$.
Then the permutation-based $p$-value $p_{{\mathcal{M}}}$ computed in Eq.~\eqref{eq:pvalue} is asymptotically valid.
That is, under the null hypothesis,
\begin{equation}
\lim_{N, R \to \infty} \Pr(p_{{\mathcal{M}}} \le \alpha ) \le \alpha.
\end{equation}
\end{theorem}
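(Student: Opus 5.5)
We argue in two layers: first that the Monte Carlo $p$-value converges to an idealized permutation $p$-value as $R\to\infty$, and then that the idealized permutation test has asymptotic size at most $\alpha$ as $N\to\infty$. Fix the data matrix $L$. Conditional on $L$, the $R$ permuted statistics $T_{\min}^{(r)}$ are i.i.d. draws from the permutation distribution $\hat G_N$. This is the law of $T_{\min}$ when each row $L_{\cdot,t}$ is independently and uniformly permuted across the $M$ columns. By the strong law of large numbers, conditionally on $L$,
\begin{equation*}
p_{\mathcal{M}} \to \tilde p_N \equiv \hat G_N(T_{\min}^{\mathrm{obs}}) \quad \text{a.s. as } R\to\infty .
\end{equation*}
By dominated convergence, $\Pr(p_{\mathcal{M}}\le\alpha)$ is asymptotically bounded by $\Pr(\tilde p_N\le \alpha)$, up to the probability that $\tilde p_N$ sits exactly at $\alpha$. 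We handle that boundary term using continuity of the limiting law established below. It therefore suffices to show $\limsup_N \Pr(\tilde p_N\le\alpha)\le\alpha$ under $\mathrm{H}_{0,\mathcal{M}}$.

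The second step is the asymptotics of the observed statistic. Under $\mathrm{H}_{0,\mathcal{M}}$ all $\mu_i$ are equal, so $\mathbb{E}[d_{i\cdot,t}]=0$ for every $i$. Assumption (a) makes the vectors $(d_{i\cdot,t})_{i}$ i.i.d. across $t$. Assumption (b) gives a Lyapunov moment condition, so the multivariate CLT yields $\sqrt N(\bar d_{1\cdot},\dots,\bar d_{M\cdot})\Rightarrow \mathcal N(0,\Sigma)$. The moments also give consistency of the sample variances, and assumption (c) keeps the limiting variances strictly positive. Slutsky then gives $(t_1,\dots,t_M)\Rightarrow Z\sim\mathcal N(0,\Omega)$, where $\Omega$ is the correlation matrix of $\Sigma$. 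By the continuous mapping theorem, $T_{\min}^{\mathrm{obs}}\Rightarrow \min_i Z_i$, whose CDF $G$ is continuous.

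The third step, which is the main obstacle, is to show that the permutation distribution $\hat G_N$ converges (in probability, uniformly) to $G$. The permuted rows are i.i.d. given $L$, and each permuted row has mean exactly zero in the deviation coordinates $d_{i\cdot,t}$ by symmetry. Its conditional covariance is the symmetrized matrix $\Sigma^{\pi}_N$, obtained by averaging the empirical covariance over all column permutations. We would apply a conditional Lindeberg CLT, using (b) to control the conditional third moments, to obtain
\begin{equation*}
\hat G_N \Rightarrow \text{law of }\min_i Z_i^{\pi}, \qquad Z^{\pi}\sim\mathcal N(0,\Omega^{\pi}),
\end{equation*}
where $\Omega^{\pi}$ is exchangeable. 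In general $\Omega^{\pi}\ne\Omega$. Our first attempt would be to show that under the null's exchangeability interpretation (as the paper states, ``discrepancy scores are exchangeable across candidate models'') the row distribution is invariant under column permutations. That gives $\Omega^{\pi}=\Omega$, and the permutation test is then exact in the Hoeffding sense: conditionally, the observed $L$ is one equally likely draw from its permutation orbit, so $\Pr(\tilde p_N\le\alpha)\le\alpha$ even for finite $N$ when $R=\infty$.

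If only equal means are available, we would fall back on studentization: since each $t_i$ is studentized, the marginals of both limits are standard normal and only the correlations can differ. The plan in that case is to bound the deviation, or to add an explicit exchangeability or equicorrelation condition. Given the limits, Pólya's theorem turns weak convergence to a continuous $G$ into uniform convergence, so $\tilde p_N - G(T_{\min}^{\mathrm{obs}})\to 0$ in probability. Since $G(\min_i Z_i)\sim\mathrm{Unif}(0,1)$, we conclude $\lim \Pr(\tilde p_N\le\alpha)=\alpha$, and combining with the first step finishes the proof.
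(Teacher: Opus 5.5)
Your outline follows the paper's proof: Lyapunov CLT and Slutsky for each $t_i$, a joint CLT for $\mathbf{t}$, continuous mapping to $T_{\min}$, the claim that the permutation distribution reproduces the null limit, and the probability integral transform. You add an explicit Monte Carlo layer and a conditional CLT where the paper cites a result.

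The gap is the step you flag yourself, and you cannot borrow it from the paper. The paper disposes of it by assertion: because $\mathrm{H}_{0,\mathcal{M}}$ treats the models symmetrically, it says, the limiting covariance is invariant under permutations of model indices. That does not follow from $\mu_1=\cdots=\mu_M$.

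Your diagnosis can be made exact. Each centred row $(L_{j,t}-\bar L_{\cdot,t})_j$ sums to zero, so a uniform row permutation has pairwise conditional correlation exactly $-1/(M-1)$. All permuted studentizers also converge to the same pooled variance. Hence $\Omega^{\pi}$ is always the equicorrelated matrix with off-diagonal entries $-1/(M-1)$, whatever the data. The true $\Omega$ need not be. Take $M=4$, $L_{1,t}=\tfrac12+X_t$, $L_{2,t}=\tfrac12-X_t$, $L_{3,t}=\tfrac12+Y_t$, $L_{4,t}=\tfrac12-Y_t$, with $X_t,Y_t$ independent, centred, bounded by $\tfrac12$ and non-degenerate. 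Conditions (a)--(c) and the null hold, yet $(d_{i\cdot,t})_i=(X_t,-X_t,Y_t,-Y_t)$. So $\Omega_{12}=\Omega_{34}=-1$ and all other off-diagonal entries are $0$.

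As written, therefore, neither your proposal nor the paper's proof establishes the theorem under the listed hypotheses.

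\textbf{First branch (exchangeability).} This is a genuine repair, but it changes the statement. If each row $L_{\cdot,t}$ has an exchangeable law under the null, the randomization hypothesis holds and the full-permutation $p$-value is valid for every $N$. The joint limit in $N,R$ then follows from a DKW bound conditional on $L$: $\Pr(p_{\mathcal{M}}\le\alpha)\le\alpha+\epsilon+2e^{-2R\epsilon^2}$. This bound is uniform in $N$ and removes your boundary term.

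\textbf{Second branch (equal means only).} This needs a lemma that neither argument contains. Since $\tilde p_N\approx G^{\pi}(T^{\mathrm{obs}}_{\min})$, with $G^{\pi}$ the law of $\min_iZ^{\pi}_i$, validity amounts to $\Pr(\min_iZ_i\le c)\le\Pr(\min_iZ^{\pi}_i\le c)$ at the $\alpha$-quantile $c$ of $G^{\pi}$. This must hold for every unit-variance Gaussian $Z$ with $\sum_i\sigma_iZ_i=0$. Slepian's inequality cannot deliver it, because the correlations are never entrywise ordered; for equal $\sigma_i$ their average is pinned at $-1/(M-1)$. What is needed is an extremal property of polytopes circumscribed about a centred ball.

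For $M=3$ the lemma does hold. Write $Z_i=u_i^{\top}G$ with $G\sim\mathcal{N}(0,I_2)$ and unit vectors $u_i$. For $c<0$ the event $\{\min_iZ_i>c\}$ is a triangle (or strip) whose sides are tangent to the circle of radius $|c|$ about the origin. Split it into six right triangles from the origin; the Gaussian mass of each is convex in its angle at the origin. Jensen's inequality then shows the equilateral case, which is the permutation limit, has least mass. For general $M$ you would have to prove the lemma or add the assumption.

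Separately, condition (c) controls $\mathrm{Var}(d_{ij,t})$, not $\mathrm{Var}(d_{i\cdot,t})$. With $M=3$ and $L_{1,t}=(L_{2,t}+L_{3,t})/2$, one gets $d_{1\cdot,t}\equiv 0$ even though (c) holds. Positivity of the limiting variances must therefore be assumed directly; the paper makes the same slip.
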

Condition (a) ensures independence across samples, (b) ensures the central limit theorem applies, and (c) rules out degenerate cases where two models produce identical outputs. 
These are regular conditions applied to testing problems.
The proof of Theorem~\ref{thm:pvalue} appears in Appendix~\ref{prf:pvalue}.

\subsection{Sequential test-and-exclusion algorithm}
\label{sec:algorithm}

To select the provenance models, we iteratively refine the set $\mathcal{M}$, as detailed in Algorithm~\ref{alg:mps}.
In each iteration, if the models in $\mathcal{M}$ are indistinguishable in their similarity to the target model $g$, i.e., $p_{\mathcal{M}}\le \alpha$, we reject this null hypothesis and exclude the most similar model, identified by $\arg\min_{f_i \in \mathcal{M}} t_i$. 
This procedure terminates when the equivalence test accepts, indicating that the remaining models are statistically indistinguishable in their closeness to $g$.
The final predicted provenance set $\hat{\mathcal{M}}$ consists of all excluded models during the procedure.
In this framework, MPS yields a small predicted set that preserves all true provenance models with a confidence level of at least $1-\alpha$, a result that fulfills the dual objectives defined in Eq.~\eqref{eq:objective}.

\begin{algorithm}[ht!]
\caption{Model provenance set (MPS)}
\label{alg:mps}
\begin{algorithmic}
\REQUIRE Candidate set $\mathcal{M}_0 = \{f_1, \dots, f_M\}$, model $g$, prompts $X = \{x_1, \dots, x_N\}$, significance level $\alpha$, permutation rounds $R$, and a distance function.
\ENSURE Predicted provenance set $\hat{\mathcal{M}}$.

\STATE Compute distances matrix $L \in \mathbb{R}^{N \times |\mathcal{M}|}$ by Eq.~(\ref{eq:loss})
\STATE $\mathcal{M} \leftarrow \mathcal{M}_0$, $\hat{\mathcal{M}} \leftarrow \emptyset$.
\WHILE{$|{\mathcal{M}}| > 1$}
    \STATE Compute $t_i$ for all $f_i \in  {\mathcal{M}}$ using Eq.~\eqref{eq:t_stat}.
    \STATE $T_{\min}^{\mathrm{obs}} \leftarrow \min_{f_i \in {\mathcal{M}}} t_i$.
    \FOR{$r = 1$ to $R$}
         \STATE  Permutation$(L_{1,t}, \dots, L_{|{\mathcal{M}}|,t})$ for $t=1\cdots N$
        \STATE Calculate $t_i^{(r)}$ and $T_{\min}^{(r)} \leftarrow \min_{f_i \in {\mathcal{M}}} t_i^{(r)}$.
    \ENDFOR
    \STATE $p_{{\mathcal{M}}} \leftarrow \frac{1}{R} \sum_{r=1}^{R} \mathbf{1}(T_{\min}^{(r)} \le T_{\min}^{\mathrm{obs}})$.
    \IF{$p_{{\mathcal{M}}} > \alpha$}
        \STATE \textbf{break}
    \ENDIF
    \STATE $f^* \leftarrow \arg\min_{f_i \in {\mathcal{M}}} t_i$.
    \STATE ${\mathcal{M}} \leftarrow {\mathcal{M}} \setminus \{f^*\}$, $\hat{\mathcal{M}} \leftarrow \hat{\mathcal{M}} \cup \{f^*\}$.
\ENDWHILE
\end{algorithmic}
\end{algorithm}

To formally justify these properties,  we now establish the theoretical guarantees for Algorithm~\ref{alg:mps}.




\begin{theorem}[Coverage guarantee]\label{thm:coverage}
Under the conditions of Theorem~\ref{thm:pvalue}, Algorithm~\ref{alg:mps} with significance level $\alpha$ produces a provenance set $\hat{\mathcal{M}}$ such that
\begin{equation}
\lim_{N,R \to \infty} \Pr\left( \mathcal{M}^* \subseteq \hat{\mathcal{M}} \right) \ge 1 - \alpha.
\end{equation}
\end{theorem}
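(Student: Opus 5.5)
The plan mirrors the coverage argument for the Model Confidence Set of \citet{Hansen2011mcs}: the event of missing a true provenance is reduced to a single erroneous acceptance of an equivalence null, and that event is then controlled with Theorem~\ref{thm:pvalue}. This needs a separation assumption implicit in the formulation. Every true provenance $f\in\mathcal{M}^*$ has expected distance $\mu_f$ strictly smaller than that of every unrelated model, i.e.\ $\max_{f\in\mathcal{M}^*}\mu_f<\min_{h\notin\mathcal{M}^*}\mu_h$. Moreover, the unrelated models $\mathcal{M}_0\setminus\mathcal{M}^*$ are mutually equidistant from $g$, so that $\mathrm{H}_{0,\mathcal{M}_0\setminus\mathcal{M}^*}$ holds. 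I would state this explicitly as the meaning of ``provenance'' in terms of $\mu_i$. If $\mathcal{M}^*=\emptyset$ the claim is trivial, so assume $|\mathcal{M}^*|=k\ge1$.

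\textbf{Step 1 (consistency of exclusion order).} I would first show that, as $N\to\infty$, with probability tending to one the first $k$ excluded models are exactly the elements of $\mathcal{M}^*$, provided the algorithm does not stop earlier. Consider any current set $\mathcal{M}\supseteq\mathcal{M}_0\setminus\mathcal{M}^*$ that still contains a true provenance. The quantity $\bar d_{i\cdot}$ converges a.s.\ to $\mu_i-\bar\mu_{\mathcal{M}}$ by the law of large numbers, and the variance estimate converges to a positive limit by (b)--(c). Hence $t_i$ behaves like $\sqrt N(\mu_i-\bar\mu_{\mathcal{M}})/\sigma_i$. This diverges to $-\infty$ for the provenance models, which lie below the average, and to $+\infty$ for unrelated models, which lie above it. So $\arg\min_i t_i$ lies in $\mathcal{M}^*$ with probability $\to1$. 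Moreover, $T_{\min}^{\mathrm{obs}}\to-\infty$ at rate $\sqrt N$. The permutation distribution of $T_{\min}^{(r)}$, by contrast, stays tight: permuting within each row makes the permuted columns exchangeable, so the permuted $t$-statistics are $O_P(1)$ by the same CLT argument used for Theorem~\ref{thm:pvalue}. Therefore $p_{\mathcal{M}}\to0$ and the test rejects with probability $\to1$ at every one of these $k$ steps. Since there are finitely many steps, a union bound gives $\Pr(\text{first }k\text{ steps reject and exclude }\mathcal{M}^*)\to1$.

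\textbf{Step 2 (reduction).} On the event of Step~1, $\mathcal{M}^*\subseteq\hat{\mathcal{M}}$ holds automatically. This is because excluded models are only ever added to $\hat{\mathcal{M}}$ and never removed. What happens afterwards on $\mathcal{M}_0\setminus\mathcal{M}^*$ can only enlarge $\hat{\mathcal{M}}$. Consequently $\Pr(\mathcal{M}^*\not\subseteq\hat{\mathcal{M}})\le\Pr(\text{Step 1 event fails})\to0$, which is stronger than the claimed $1-\alpha$ bound. To present the result in the MCS-style form with $\alpha$ appearing, I would alternatively split the failure event into two parts. The first is an early acceptance while true provenances remain, whose probability vanishes by the power argument. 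The second is the first time the algorithm faces the set $\mathcal{M}_0\setminus\mathcal{M}^*$ under a true null, where Theorem~\ref{thm:pvalue} bounds a false rejection by $\alpha$. That second event affects only the size of $\hat{\mathcal{M}}$, not coverage. Combining the two parts gives the limit $\ge1-\alpha$.

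\textbf{Main obstacle.} The delicate step is the power claim in Step~1: showing that the permutation null distribution of $T_{\min}$ remains $O_P(1)$ under the alternative. The permuted rows are not drawn from a true null, since the marginal means differ. I would handle this by conditioning on the data. Conditionally, each permuted column averages $N$ independent draws, each uniform over the row entries, with the same conditional mean for all columns. A conditional Lindeberg CLT then applies, using the $(2+\delta)$-moment condition (b) and the positivity (c) to bound the conditional variances away from zero. This yields that $T_{\min}^{(r)}$ converges conditionally to the minimum of a nondegenerate Gaussian vector, while $T_{\min}^{\mathrm{obs}}\to-\infty$. Letting $R\to\infty$ then replaces the empirical $p$-value by its conditional probability via Glivenko--Cantelli.
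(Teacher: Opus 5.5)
Your proof is essentially correct, with two repairs noted below, and it follows a genuinely different route from the paper's.

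\textbf{Comparison with the paper.} The paper's proof argues that, upon rejection, the removed model lies in $\mathcal{M}^*$. It then cites Theorem~\ref{thm:pvalue}, a Type~I bound, immediately after noting that $\mathrm{H}_{0,\mathcal{M}^{(k)}}$ is false. Finally it simply asserts that the procedure stops only after $\mathcal{M}^*$ is exhausted, with probability at least $1-\alpha$. You instead observe that excluded models are \emph{added} to $\hat{\mathcal{M}}$, so the only way to miss a provenance is an early \emph{acceptance} of a false null. You control that event by proving consistency of the permutation test against fixed alternatives. On one side, $T_{\min}^{\mathrm{obs}}\to-\infty$. On the other, conditionally on the data, the row-permuted centered distances are independent across prompts with conditional mean zero and a common nondegenerate variance, so $T_{\min}^{(r)}$ stays tight.

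What your route buys:
\begin{itemize}
\item It supplies exactly the power step the paper's sketch omits.
\item It gives the sharper limit $1$ rather than $1-\alpha$.
\item It shows Theorem~\ref{thm:pvalue} plays no role in coverage: the level-$\alpha$ false rejection on $\mathcal{M}_0\setminus\mathcal{M}^*$ only enlarges $\hat{\mathcal{M}}$. For that reason, your opening sentence (which says the miss event is controlled by Theorem~\ref{thm:pvalue}) and the MCS-style split in Step~2 should be dropped.
\item Your sign-based argmin argument avoids a flaw in the paper. Unrelated models sit strictly above the current average, so $t_h\to+\infty$. The paper instead claims that $\mu_f<\mu_{f'}$ forces $t_f<t_{f'}$, which can fail because each $t_i$ is studentized by its own $\sigma_i$.
\end{itemize}
The price is an explicit separation assumption plus a conditional CLT under the alternative. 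The paper instead defines $\mathcal{M}^*$ as the argmin set of $\mu$. That gives strict separation when $\mathcal{M}^*\neq\mathcal{M}_0$, but not your equidistance of unrelated models.

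\textbf{Two repairs.}
\begin{enumerate}
\item When provenance means differ, not every provenance lies below the current average; a distant ancestor can sit above $\bar\mu_{\mathcal{M}}$. What you need, and what your assumptions give, is weaker: every unrelated model lies strictly above the average, and the provenance with smallest $\mu$ lies strictly below it.
\item You must assume $\mathcal{M}_0\setminus\mathcal{M}^*\neq\emptyset$. The loop runs only while $|\mathcal{M}|>1$, so if every candidate is a provenance, the last one is never added. For $M=1$, $\hat{\mathcal{M}}=\emptyset$ always, and the statement itself fails there.
\end{enumerate}

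Given at least one unrelated model, equidistance can be dropped altogether. Under strict separation, the last remaining unrelated model has $t_h\to+\infty$ whenever a provenance is still present, so it is never excluded first. Your power argument then forces every provenance out within $M-1$ steps.
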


\begin{theorem}[Asymptotic efficiency]
\label{thm:efficiency}
Suppose there exists a gap $\delta > 0$ such that
\[
\max_{f \in \mathcal{M}^*} \mu_f < \min_{f \in \mathcal{M} \setminus \mathcal{M}^*} \mu_f - \delta.
\]
Then Algorithm~\ref{alg:mps} produces $\hat{\mathcal{M}} = \mathcal{M}^*$ with probability
\[
\Pr(\hat{\mathcal{M}} = \mathcal{M}^*) \ge 1 - O(\exp(-c N \delta^2))
\]
for some constant $c > 0$.
\end{theorem}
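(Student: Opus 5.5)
The plan is to split the event $\{\hat{\mathcal{M}} \neq \mathcal{M}^*\}$ into two failure modes and bound each by a Hoeffding-type concentration inequality. Write $K = |\mathcal{M}^*|$. On the intermediate good event, the algorithm (i) removes the members of $\mathcal{M}^*$ in the first $K$ iterations, with every one of those tests rejecting, and (ii) fails to reject at iteration $K+1$, when the remaining set is $\mathcal{M}\setminus\mathcal{M}^*$. The second requirement only makes sense if the non-provenance models are mutually indistinguishable, i.e., share a common mean $\mu$, so that $\mathrm{H}_{0,\mathcal{M}\setminus\mathcal{M}^*}$ holds there; I will make this assumption explicit, since otherwise exact recovery cannot hold with probability tending to one. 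The same caveat applies to part (ii) below: it only yields $1-\alpha$ asymptotically, not an exponential rate.

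\textbf{Step 1 (concentration).} Since $L_{i,t} \in [0,1]$, each $d_{ij,t} \in [-1,1]$. Hoeffding's inequality then gives
\[
\Pr\bigl(|\bar d_{ij} - (\mu_i - \mu_j)| > \delta/4\bigr) \le 2\exp(-N\delta^2/32).
\]
A union bound over the $M^2$ pairs shows that, with probability at least $1 - 2M^2 e^{-N\delta^2/32}$, every empirical gap is within $\delta/4$ of its mean. I will also need the variance estimates $\widehat{\mathrm{var}}(\bar d_{i\cdot})$ to be bounded above and away from zero. The upper bound follows because the observations lie in $[-1,1]$. For the lower bound I will use condition (c) together with a concentration bound for the sample variance (again Hoeffding on squares), giving $\widehat{\mathrm{var}} \in [\sigma_{\min}^2/2,\, 2]$ off an exponentially small event.

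\textbf{Step 2 (correct ordering and rejection).} On this event, take any current set $\mathcal{S}$ that still contains some $f \in \mathcal{M}^*$. The most similar member of $\mathcal{S}$ has $\bar d_{i\cdot}$ at most roughly $-\delta(|\mathcal{S}|-|\mathcal{S}\cap\mathcal{M}^*|)/|\mathcal{S}| + \delta/2$, which is a negative quantity of order $\delta$. Hence its $t$-statistic is of order $-\sqrt{N}\,\delta/\sigma$, and it is the $\arg\min$; this argmin lies in $\mathcal{M}^*$, because any non-provenance model has mean at least $\delta$ larger. Under permutation the rows become exchangeable, and conditionally on $L$ each permuted $t_i^{(r)}$ is a mean of bounded, exchangeably distributed terms. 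A Hoeffding bound for sampling without replacement (or a bounded-differences argument) gives $\Pr(T_{\min}^{(r)} \le -c\sqrt N\delta) \le M e^{-cN\delta^2}$. So the permutation $p$-value is exponentially small: once $R$ is large, $p_{\mathcal{S}}$ is at most $e^{-cN\delta^2}$ plus the Monte Carlo error, which is below $\alpha$. The test therefore rejects and removes a true provenance model. Iterating $K$ times removes exactly $\mathcal{M}^*$.

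\textbf{Step 3 (stopping).} At $\mathcal{S} = \mathcal{M}\setminus\mathcal{M}^*$ the null holds. By Theorem~\ref{thm:pvalue}, the probability of an extra rejection is asymptotically at most $\alpha$. Combining the three steps and collecting the union-bound constants into $c$ yields the exponential bound on the ordering and rejection failures.

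\textbf{Main obstacle.} The hard part is Step 2's permutation tail: controlling the studentized permuted statistic uniformly, conditionally on the data. I would first try a combinatorial (Hoeffding 1963) inequality for permutation statistics, applied to the numerator, with the variance bounded below via condition (c). The second delicate point is the stopping step: the claimed $1 - O(e^{-cN\delta^2})$ really requires treating the final acceptance as a $1-\alpha$ event, or choosing $\alpha = \alpha_N \to 0$. I will state that dependence explicitly.
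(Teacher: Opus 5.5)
Your decomposition is sound and goes well beyond the paper's argument, which consists only of your Step~1. The paper applies Hoeffding to the pairwise gaps $\bar d_{f\cdot}-\bar d_{f'\cdot}$ at radius $\delta/2$ with a union bound over $M^2$ pairs, infers $t_f<t_{f'}$, and then simply asserts that the algorithm terminates once $\mathcal{M}^{(k)}\subseteq\mathcal{M}\setminus\mathcal{M}^*$. It never examines the permutation $p$-value, either at the removal steps (your Step~2) or at the stopping step (your Step~3).

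Your two caveats are correct and substantive. If the non-provenance means differ, the test on $\mathcal{M}\setminus\mathcal{M}^*$ rejects with probability tending to one, and $\hat{\mathcal{M}}$ picks up a non-provenance model. If they share a common mean, that final test has asymptotic size $\alpha$; the paper's own proof of Theorem~\ref{thm:pvalue} asserts $p_{\mathcal{M}}$ is asymptotically uniform. So for fixed $\alpha$ the best available statement is $\liminf\Pr(\hat{\mathcal{M}}=\mathcal{M}^*)\ge 1-\alpha$, with near equality when the non-provenance columns are exchangeable. The case $\mathcal{M}^*=\emptyset$ shows this directly: the gap condition holds vacuously, and the algorithm must accept its very first test. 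The statement as written is therefore not provable, and your route yields a true one. The removal phase fails only with probability $O(e^{-cN\delta^2})$. The stated overall rate additionally needs $\alpha=\alpha_N$ decaying exponentially and a non-asymptotic size bound for the final test, since Theorem~\ref{thm:pvalue} alone is only asymptotic.

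Step~2 needs four repairs.

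(i) Do not justify the argmin by ``any non-provenance model has mean at least $\delta$ larger.'' Studentization can reverse the order of two negative $\bar d_{i\cdot}$ with different variances. This is exactly the flaw in the paper's passage from $\bar d_{f\cdot}<\bar d_{f'\cdot}$ to $t_f<t_{f'}$. Argue by signs instead. Under your common-mean assumption, every non-provenance model in $\mathcal{S}$ has centered mean $\mu-\bar\mu_{\mathcal{S}}>\delta|\mathcal{S}\cap\mathcal{M}^*|/|\mathcal{S}|>0$. The smallest-mean member has centered mean below $-\delta|\mathcal{S}\setminus\mathcal{M}^*|/|\mathcal{S}|$.

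(ii) These margins are only guaranteed to be of order $\delta/M$, so radius $\delta/4$ and your slack $\delta/2$ are too coarse. For example, your bound on the minimum stops being negative once provenance models fill half of $\mathcal{S}$. Take the radius of order $\delta/M$ and absorb $M$ into $c$. A single union bound over pairs then covers every subset $\mathcal{S}$, because $\bar d_{i\cdot}$ averages the $\bar d_{ij}$.

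(iii) Given $L$, the row permutations are independent across $t$. Each permuted $\bar d^{(r)}_{i\cdot}$ is therefore an average of independent, conditionally centered terms in $[-1,1]$, and plain Hoeffding applies; no without-replacement bound is needed. The lower bound on the permuted variance comes from the within-row dispersion $|\mathcal{S}|^{-1}\sum_i(L_{i,t}-\bar L_{\cdot,t})^2$, where $\bar L_{\cdot,t}$ is the row mean over $\mathcal{S}$. Its expectation is at least $|\mathcal{S}|^{-1}\sum_i(\mu_i-\bar\mu_{\mathcal{S}})^2>0$. It does not come from condition (c), which concerns the pairwise $d_{ij,t}$ and does not force $\mathrm{Var}(d_{i\cdot,t})>0$.

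(iv) Replace ``$p$-value plus Monte Carlo error'' by Markov's inequality, $\Pr(p_{\mathcal{S}}>\alpha\mid L)\le\Pr(T^{(1)}_{\min}\le T^{\mathrm{obs}}_{\min}\mid L)/\alpha$. This bound is exponentially small uniformly in $R$.
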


\begin{remark}[Minimum detectable gap]
\label{rmk:min-gap}
For reliable detection, the gap $\delta$ should exceed the noise level, i.e., $\delta \gtrsim \bar{\sigma} / \sqrt{N}$ where $\bar{\sigma} = \inf_{i \ne j} \mathrm{Var}(d_{ij,t})$.
This implies that the minimum detectable gap scales as $\delta_{\min} = O(N^{-1/2})$, which matches the standard parametric rate.
Equivalently, to detect a gap of size $\delta$, the required sample size is $N = O(\bar{\sigma}^2 / \delta^2)$.
\end{remark}

Theorem~\ref{thm:coverage} ensures that MPS satisfies the coverage constraint in the Eq.~\eqref{eq:objective}, while Theorem~\ref{thm:efficiency} shows that MPS recovers the exact provenance set $\mathcal{M}^*$ asymptotically.
The proofs are deferred to Appendix~\ref{prf:coverage} and Appendix~\ref{prf:efficiency}.

\section{Experiments}

\begin{figure*}[tb]
    \centering
    \includegraphics[width=\linewidth]{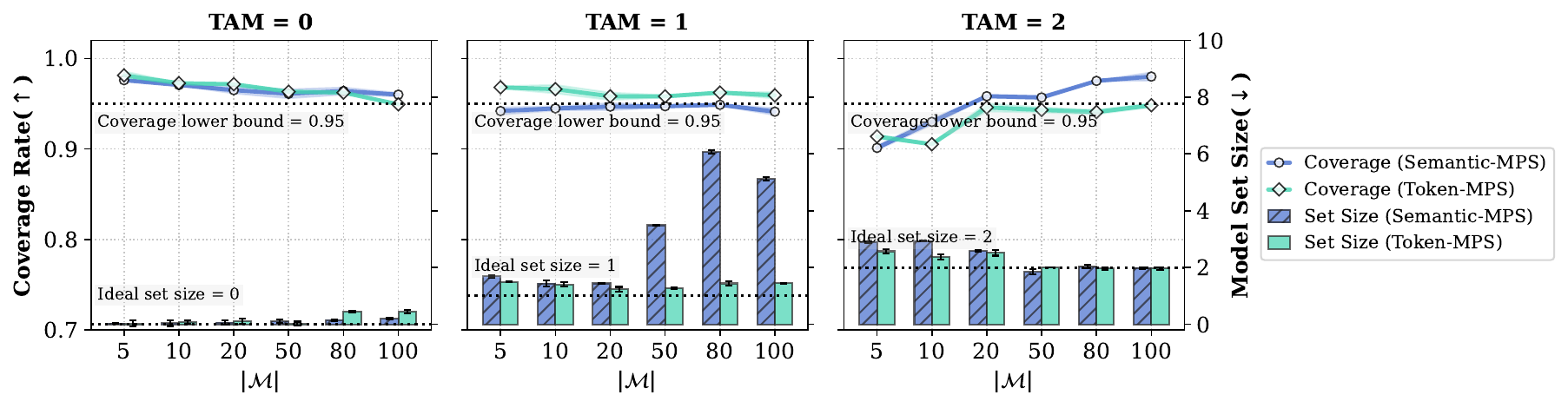}
    \caption{Provenance performance of Semantic-MPS and Token-MPS at significance level $\alpha=0.05$. $|\mathcal{M}|$ denotes the size of candidate sets, and $\mathrm{TAM}$ denotes the number of true ancestor models in one candidate set.}
    \label{fig:main_res}
\end{figure*}

\begin{figure*}[tb]
    \centering
    \includegraphics[width=\linewidth]{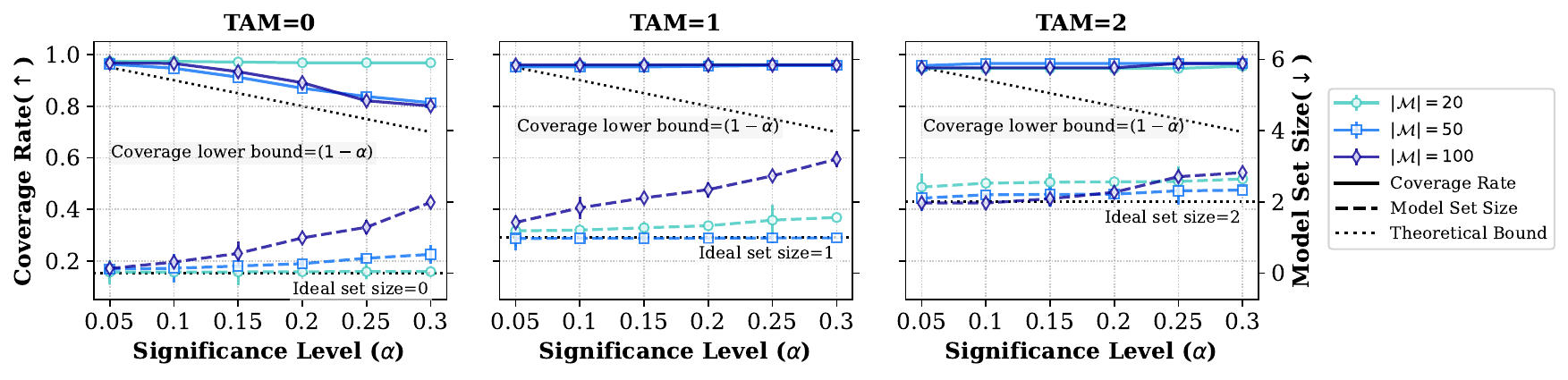}
     \caption{
     Token-MPS performance across varying $\alpha$. $|\mathcal{M}|$ denotes the candidate set size; $\mathrm{TAM}$ is the number of true ancestor models.}
     
     \label{fig:a_01_res}
\end{figure*}

\begin{table}[t]
\caption{Benchmark statistics stratified by lineage depth.
For $\mathrm{TAM} \!$ = 0/1, fewer derived models than derivation chains arise because a model can participate in multiple derivation chains, serving as both an upstream child and a downstream parent. ``-'' indicates settings omitted due to insufficient number of models.
}
\label{tab:mptbench}
\centering
\begin{tabular}{@{}llll@{}}
\toprule
\textbf{Feature}     &  \textbf{TAM=1/0}    &  \textbf{TAM=2} &  \textbf{TAM=3}   \\ \midrule
Pre-trained Models   & 78             &  38  &  9  \\
Derived Models       & 377            & 118  &  13 \\
Total Models         & 455            & 145 &  22 \\
Derivation Chains & 383           & 118  &  13 \\ 
\#Test dataset     & 383        & 118  &  -  \\
\bottomrule
\end{tabular}
\end{table}

\label{experiments}
In this section, we empirically validate the theoretical guarantees of MPS on a real-world benchmark with multi-model provenance. Specifically, we first evaluate MPS under various distance functions across various candidate pool scales. Then, we analyze how the MPS procedure is impacted by the significance level $\alpha$, permutation count $R$, and the number of prompts $T$, respectively. 

\begin{figure*}[tb]
    \centering
    \includegraphics[width=\linewidth]{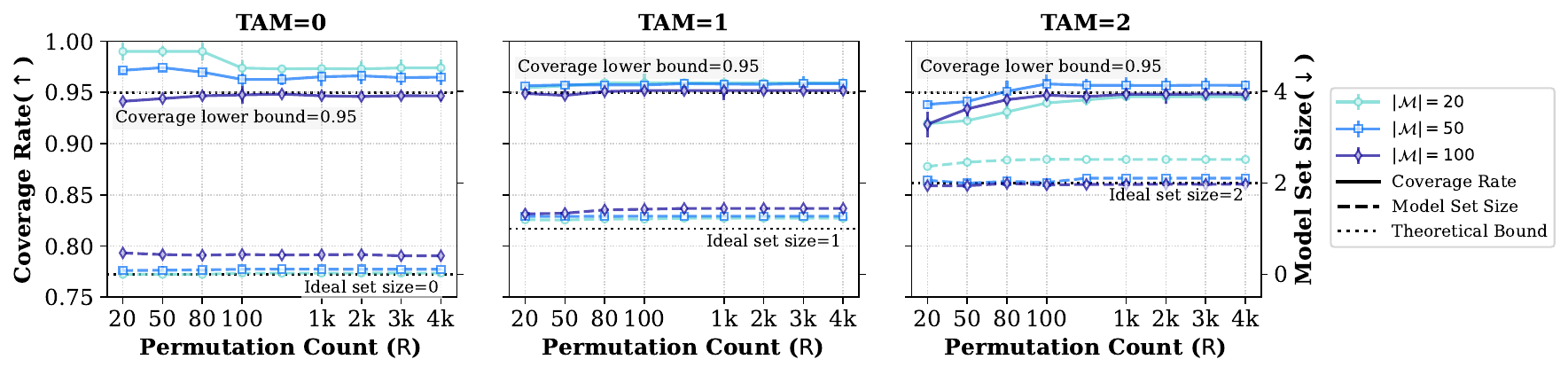}
    \caption{Token-MPS performance across varying $R$. $|\mathcal{M}|$ denotes the candidate set size; $\mathrm{TAM}$ is the number of true ancestor models.}
    \label{fig:varying_r}
\end{figure*}

\begin{figure*}[tb]
    \centering
    \includegraphics[width=\linewidth]{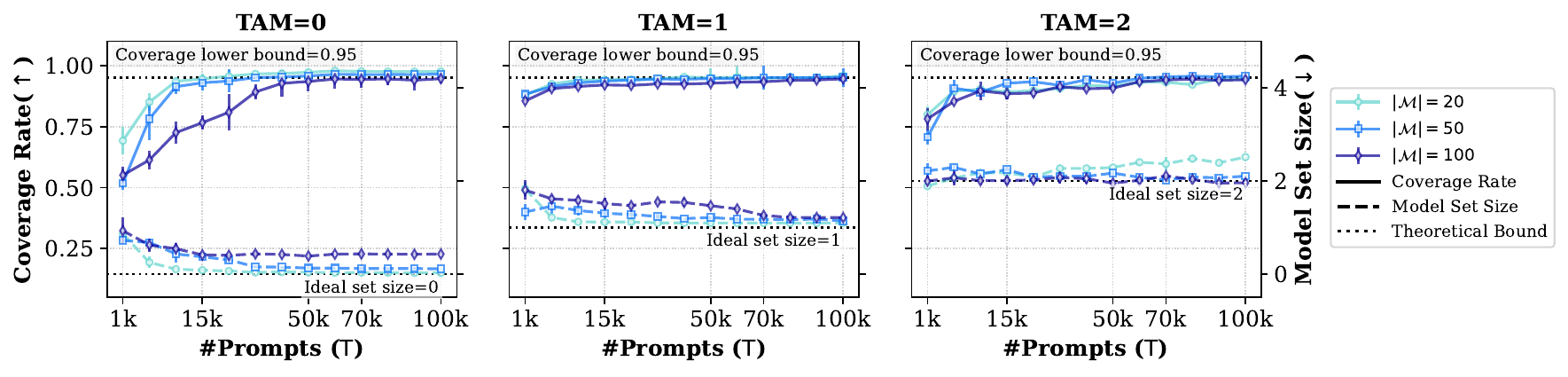}
    \caption{Token-MPS performance across varying $T$. $|\mathcal{M}|$ denotes the candidate set size; $\mathrm{TAM}$ is the number of true ancestor models.}
    \label{fig:varying_prompt}
\end{figure*}

\subsection{Experimental setup}
\textbf{Models and benchmark.}
We collect widely used LLMs based on download counts from the Hugging Face platform~\cite{HuggingFace2023}. After filtering out models without explicit derivation links and those exceeding 3B parameters, we obtain 455 LLMs with lineage depth up to three.
Models are then grouped by their number of \emph{True Ancestor Models} (TAM). 
For example, given a derivation chain A $\rightarrow$ B $\rightarrow$ C  $\rightarrow$ D, we have $\mathrm{TAM}(D) = 3$.
Based on established derivation chains, we construct test instances with varying $\mathrm{TAM}$ count, each consisting of a target model and a candidate set of size $|\mathcal{M}|$ that includes all true ancestors and $|\mathcal{M}|-\mathrm{TAM}$ randomly sampled unrelated models.
Test instances in $\mathrm{TAM}$ = 0 are derived from $\mathrm{TAM}$ = 1 chains by randomly sampling $|\mathcal{M}|$ unrelated models per chain. Note that we omit the case of $\mathrm{TAM}$ = 3  due to insufficient samples.
Table~\ref{tab:mptbench} summarizes the benchmark, with details on model collection and test construction in Appendix~\ref{apd:benchmark}.

\textbf{Framework.} We implement two established indicators to measure provenance relationships: (1)~next-token distance \cite{Nikolic2025Model} for linguistic divergence, denoted as \textit{Token-MPS}, and (2)~response semantic distance \cite{wu2025llmdnatracingmodel} for semantic dissimilarity, denoted as \textit{Semantic-MPS}. We employ ten popular closed-source LLMs to generate 100k diverse incomplete sentences as prompts to elicit behavioral patterns. 
More details of distance functions and prompts collection are described in \textcolor{black}{Appendix}~\ref{apd:prompt_con}.

\textbf{Evaluation metrics.} 
We evaluate the performance of the MPS procedure using two typical metrics for set-valued inference~\cite{angelopoulos2021gentle}: (1) \textbf{Coverage rate}, which measures the fraction of instances whose true ancestor models are completely identified. By design, it is lower bounded by  $1 - \alpha$.
(2) \textbf{Predicted set size}, which is closer to ground-truth size indicating better selection.

\textbf{Implementation details.}
In the MPS procedure, each model is asked to predict the next token via greedy decoding, ensuring stable behavioral patterns to suppress stochastic noise.
All experiments are conducted with a significance level of $\alpha\!=\!0.05$, permutation count $R\!=\!1000$ , and prompt count $T\!=\!100\mathrm{k}$, unless otherwise stated. We report the mean values of each metric across 10 runs. 

\subsection{Main results}

\textbf{MPS enables coverage of all true provenances with near-minimal sets. }
Figure~\ref{fig:main_res} shows the averaged performance of MPS across varying $|\mathcal{M}|\in [5,100]$ at significance level $\alpha$ = 0.05, with results for other values of $\alpha$ reported in Appendix~\ref{apd:main_res_alpha}.
A salient observation is that MPS under both distance functions consistently maintains coverage rates above the desired confidence bound $1\!-\!\alpha$ across varying $|\mathcal{M}|$. For example, even in a challenging setting with 100 candidates, MPS can robustly cover all sources with high probability.
Note that this robustness does not lead to overly conservative (large) sets; the predicted set size stays tightly concentrated around the ideal value even as the candidate pool $\mathcal{M}$ expands.
Overall, MPS exhibits robust performance across varying candidate pool scales by precisely identifying the true source while eliminating irrelevant models.

\textbf{MPS benefits from larger candidate pools in deep lineage settings.} 
As $|\mathcal{M}|$ increases, Figure~\ref{fig:main_res} reveals an interesting divergence: performance decays for  $\mathrm{TAM} \! \le 1$ with lower coverage and larger sets, yet improves for deep lineages ($\mathrm{TAM} \! =\!2$) in both metrics. 
We attribute this shift to the lineage signal attenuation at greater depths (see Appendix~\ref{apd:score distribution analysis} for detailed analysis). 
Specifically, in direct derivations ($\mathrm{TAM} \! \in \{0,1\}$), a larger pool
increases the risk of encountering distractors that mimic the true sources (e.g., same-family models), leading to its accidental miss of the true one or more redundant selections. 
Conversely, for the weaker signals in deep lineages, a larger pool provides a more precise estimation of the null distribution. This enhances statistical resolution, allowing MPS to better distinguish subtle provenance signals from noise.
Such scale-driven refinement highlights the feasibility of MPS for large-scale model provenance auditing in complex lineage settings.

\begin{figure}[t]
    \centering
    \begin{subfigure}{\linewidth}
         \centering
        \includegraphics[width=\linewidth]{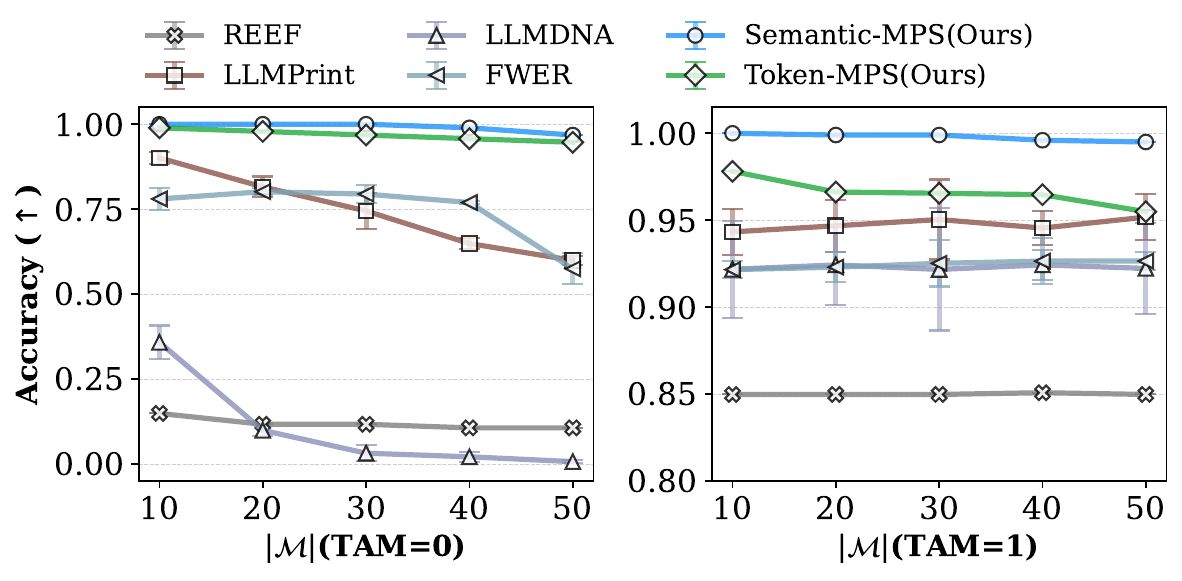}
    \end{subfigure}

     \begin{subfigure}{\linewidth}
        \centering
        \includegraphics[width=\linewidth]{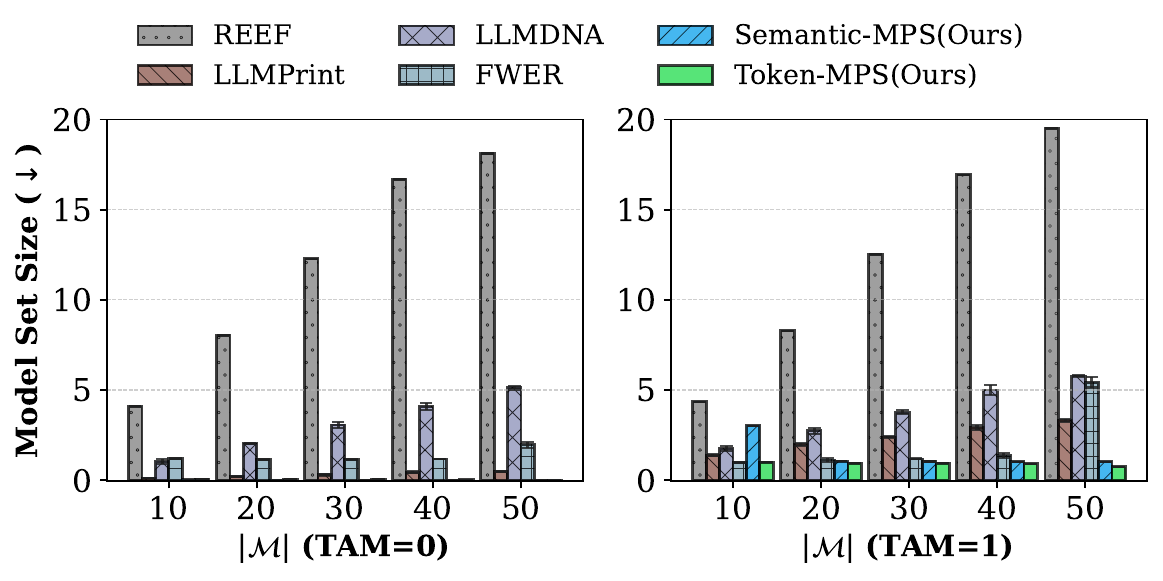}
    \end{subfigure}
    
    \caption{
      Provenance attribution performance comparison across different methods under multi-candidate settings. $|\mathcal{M}|$ denotes the candidate pool size; $\mathrm{TAM}$ is the number of true parent models.}
    \label{fig:mutil-souce mpts_res}
\end{figure}

\begin{figure}[t]
    \centering

    \includegraphics[width=\linewidth]{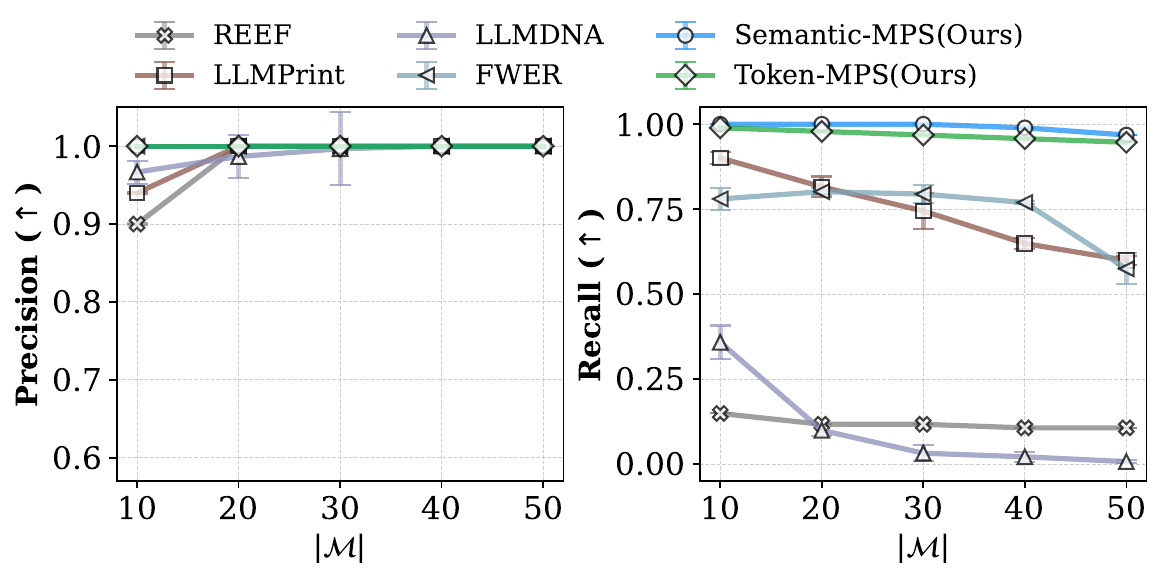}
    \caption{Unauthorized derivative model detection comparison across different methods under multi-candidate settings.}
    \label{fig:case1}
\end{figure}



\textbf{MPS maintains statistical validity across significance levels.}
We also verify that the empirical behavior of MPS conforms to its theoretical guarantees. As shown in Figure~\ref{fig:a_01_res} for Token-MPS and Appendix~\ref{apd:semantic_alpha_ana} for Semantic-MPS, the coverage consistently remains above the 1-$\alpha$ bound, demonstrating the statistical validity of MPS through rigorous error control. 
Specifically, a larger $\alpha$ relaxes this theoretical floor, providing more error tolerance for false selection. Then the coverage for $\mathrm{TAM} \!=\!0$ (favoring empty sets) decreases as expected, while for $\mathrm{TAM} \! > \! 0$ it improves by capturing near-top candidates which may be missed under stricter bounds.
Notably, even with a larger $\alpha$, the predicted set remains remarkably compact. For example, at $\alpha\!=\!0.3$ and $|\mathcal{M}|\!=\!100$, the set size stays below 4.
Overall, these results demonstrate that MPS exhibits well-calibrated behavior consistent with theory, allowing practitioners to flexibly tune $\alpha$ to meet the needs without compromising the statistical reliability.


\textbf{MPS achieves stability with modest computational overhead.}
To assess the cost-effectiveness of MPS, we evaluate Token-MPS across two different overhead settings: permutation counts ($R$) and prompt counts ($T$). As shown in Figure~\ref{fig:varying_r} and Figure~\ref{fig:varying_prompt} respectively,
the coverage of MPS exceeds the $1-\alpha$ bound even at the minimum tested scale, e.g., $R\!=\!20$ or $T\!=\!15\mathrm{k}$ when $\mathrm{TAM} \! =\! 1$, while the prediction set size rapidly shrinks toward the ideal value as overhead increases. 
Specifically, for permutation, both coverage rate and set size stabilize once $R\ge100$, even in a challenging $\mathrm{TAM} \! =\! 2$ setting with 100 candidates.
Regarding prompt counts, performance of MPS gradually aligns with the theoretical bound, reaching a plateau at a point between $T\!=\!15\mathrm{k}$ and $70\mathrm{k}$ that depends on $\mathrm{TAM}$ setting.
At lower $T$, insufficient samples fail to stably sustain provenance signals, thereby yielding sub-theoretical results.
Note that such prompts are effortless to acquire as they are randomly generated simple sentences.
Overall, MPS requires relatively small $R$ and $T$ to provide reliable determination, ensuring its practicality for large-scale provenance tasks.

\section{Empirical Applications}
\label{sec:emprical apps}
Beyond statistical guarantee evaluation, this section investigates the empirical applicability of MPS in model auditing scenarios. Specifically, we study its effectiveness for two downstream decision-making tasks and its use in providing quantitative evidence for model non-infringement.

\subsection{Model provenance attribution}
\label{subsec: provenance attribution}
Model provenance attribution aims to identify the specific source models of the target model, enabling analyses such as model tracing, intellectual property compliance, and development auditing. While prior works on model provenance largely focus on pairwise verification between a single suspected source and a target model~\cite{hu2025fingerprinting,wu2025llmdnatracingmodel}, many practical scenarios involve multiple candidate models and unknown derivation relationships. We therefore study the general provenance attribution case, where a target model is examined against multiple candidates and the goal is to correctly find the true parent models when they exist, or return an empty set when no such parents are present.

\textbf{Setup.}
We evaluate this task using Bench-A~\cite{Nikolic2025Model}, a human-curated dataset including 100 provenance and non-provenance pairs and 10 held-out reference models. We extend it to the multi-candidate setting by augmenting each suspect parent model with randomly selected models unrelated to $g$. We compare MPS with existing provenance detection methods, including white-box approaches such as REEF~\cite{zhang2025reef} and LLMPrint~\cite{hu2025fingerprinting}, and black-box techniques such as LLMDNA~\cite{wu2025llmdnatracingmodel} and FWER~\cite{Nikolic2025Model}. Each baseline is applied iteratively to each candidate–target pair, and candidates with positive decisions are aggregated into a predicted provenance set. All results are averaged over three runs, reporting attribution accuracy and predicted set size. More details of the baselines are provided in Appendix~\ref{apd: detail_pairwise_detection}.

\textbf{Results.}
Figure~\ref{fig:mutil-souce mpts_res} presents the averaged attribution accuracies and set size of different methods under multi-candidates scenario.
Compared to the empirical baselines, MPS consistently outperforms across all scales, maintaining higher accuracy (over 0.97 for Semantic-MPS and 0.95 for Token-MPS) while keeping smaller predicted set sizes near ground-truth levels.
Moreover, all baselines suffer obvious accuracy degradation in the no-parent setting ($\mathrm{TAM} \! =\!0$). This occurs because larger candidate sets inflate false-positive errors and produce incorrect non-empty outputs, as evidenced by the expansion of predicted set sizes.
By contrast, MPS remains relatively stable as $|\mathcal{M}|$ grows, since it jointly evaluates candidate models rather than making independent decisions, highlighting its robustness in large-scale provenance analysis.
We also evaluate our MPS on original pairwise instances in Bench-A, as reported in Appendix~\ref{apd:pairwise_dectection}, further demonstrating its effectiveness for model provenance attribution.

\subsection{Unauthorized derivative model detection}
During model onboarding, platforms or third-party assessors are primarily concerned with whether a to-be-deployed model exhibits potential IP infringement risks against existing models~\cite{yoon2025intrinsic,zhang2025self}, rather than identifying the specific source models. Motivated by this practical consideration, we employ MPS as a lightweight risk-aware decision tool to detect whether a given model is derived from any candidate models without authorization based on the predicted set size. 
Specifically, an empty set suggests the model is free from reuse, while a non-empty output flags potential derivation risk.

\textbf{Setup.}
We use the same Bench-A instances and baseline methods as in Section~\ref{subsec: provenance attribution}.
For clarity, we treat multi-candidate instances with no true parents as risk-free cases, i.e., $\mathrm{TAM}(g)=0$, and all others as risky, with the risky class treated as the positive class.
For each candidate scale, we construct a balanced benchmark with 100 risk-free and 100 risky test instances.
We apply MPS and all baselines to produce a predicted provenance set, and classify an instance as risk-free if the set is empty, and risky otherwise.
We report standard binary classification metrics, including Precision and Recall, averaged over three runs.

\textbf{Results.} 
The averaged performance of different detectors is presented in Figure~\ref{fig:case1}. In particular, our MPS outperforms baselines with both Precision and Recall consistently above 0.97 across all candidate settings, demonstrating its stable and reliable risk-aware decisions.
Although baseline methods maintain high Precision as the candidate set increases, their Recall declines sharply, indicating that many risk-free cases are incorrectly classified as risky. For instance, the Recall of REEF falls to around 0.1 when $|\mathcal{M}|=50$. 
This behavior arises from their any-hit decision strategy, in which a single false provenance can trigger the risky prediction, leading to an accumulation of false positives as the candidate pool grows. 
Conversely, MPS aggregates relevance signals from all candidates relative to the target model, enabling more reliable discrimination and stable performance across different candidate set sizes.


\subsection{Non-infringement claim assessment}
In model IP disputes, it is critical to quantify the sufficiency of a model non-infringement claim, ensuring the verdict is backed by rigorous evidence rather than mere heuristics~\cite{xu2025copyright}.
MPS naturally provides a principled assessment mechanism for this purpose by testing the null hypothesis $H_{0,\mathcal{M}}$, i.e., assuming the model $g$ is not derived from the existing models in the set $\mathcal{M}$. 
If the $p$-value associated with $H_{0,\mathcal{M}}$ is greater than the significance level $\alpha$, MPS fails to reject $H_{0,\mathcal{M}}$, indicating that model $g$ cannot be statistically linked to any model in $\mathcal{M}$. 
By design, when $\alpha= p_{\mathcal{M}}$, the resulting empty provenance set suggests insufficient evidence to support the infringement claim of $g$.
Consequently, we utilize the $p$-value $p_{\mathcal{M}}$ to provide evidence that model $g$ is not derived from any pre-existing model in $\mathcal{M}$, and refer to it as the \textbf{N}on-\textbf{I}nfringement \textbf{Score} (NI-Score). A higher NI-Score indicates that the model $g$ is less likely to be a derivative (infringing) product.

\textbf{Setup.} 
For a balanced evaluation, we sample 100 $(\mathcal{M}, g)$ instances across $\mathrm{TAM} \! \in \! \{0,1,2\}$ from the benchmark in Section~\ref{experiments}.
Model $g$ under the $\mathrm{TAM}\! = \!0$ setting is treated as clean, i.e., not infringing on any model in the candidate set, whereas the model in $\mathrm{TAM}\!>\!0$ is regarded as an infringing case. We apply MPS to compute the $p$-values with respect to $H_{0,\mathcal{M}}$, and report results averaged over the three runs.

\textbf{Results.} Figure~\ref{fig:case2} shows the averaged NI-Scores of both Semantic-MPS and Token-MPS across different lineage depths. We observe that the NI-Score remains consistently above 0.85 in non-infringement scenarios, i.e., $\mathrm{TAM}=0$, while it drops precipitously to near-zero values for all infringing scenarios where $\mathrm{TAM}>0$. This clear margin highlights the discriminative power of the NI-Score, confirming its utility as a potential indicator for automated non-infringement auditing.

\begin{figure}[]
    \centering
    
    \begin{subfigure}{0.49\linewidth}
        \centering
        \includegraphics[width=\linewidth]{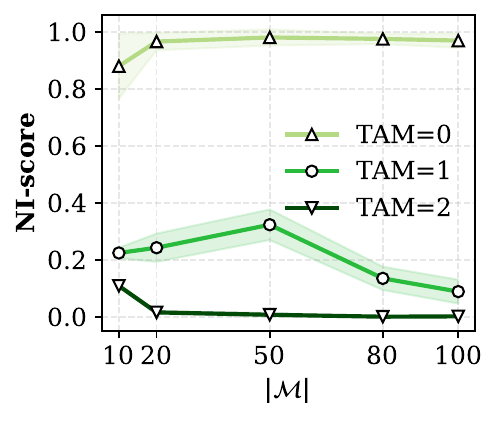}
    \end{subfigure}
    \hfill
    \begin{subfigure}{0.49\linewidth}
        \centering
        \includegraphics[width=\linewidth]{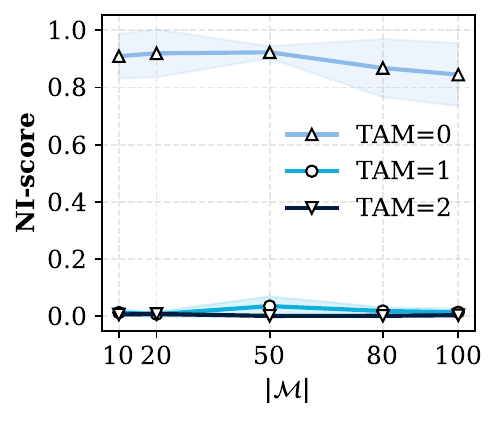}
    \end{subfigure}
    
 \caption{Analysis of averaged NI-Scores constructed by Semantic-MPS (left) and Token-MPS (right) on 100 multi-candidate test instances. $|\mathcal{M}|$ denotes the candidate set size. $\mathrm{TAM}$ indicates the number of true ancestor model among candidates. Higher scores signify that target model $g$ is more likely non-infringing.}
    \label{fig:case2}
\end{figure}

\section{Conclusion}
In this work, we first present the problem of model provenance testing with formal guarantees in multi-candidate settings. To address this, 
we propose Model Provenance Set (MPS), which leverages sequential significance testing to iteratively detect multiple potential provenances.
Furthermore, we establish the asymptotic theory of MPS to provide formal coverage guarantees for true provenance models at a predefined confidence level.
Extensive experiments validate that our method achieves reliable coverage and performs effectively across diverse model attribution and auditing tasks.
We hope this study can inspire future research on reliable model provenance and promote its adoption in practical IP protection and auditing scenarios.



\section*{Impact Statement}

This paper presents work whose goal is to advance the field of machine learning. There are many potential societal consequences of our work, none of which we feel must be specifically highlighted here.

\nocite{langley00}

\bibliography{A_reference}
\bibliographystyle{icml2026}

\newpage
\appendix
\onecolumn

\section{Related Work}
\label{apd:related_work}
Existing methods for model provenance detection primarily focus on verifying whether a model is derived from a specific copyrighted model. Depending on whether the copyrighted model is proactively modified, these methods are typically categorized into watermark-based and fingerprint-based approaches. 


\textbf{Watermarking.}  This technique attempts to embed identifiable features into the copyrighted model, which can be activated under specific conditions, allowing one to identify whether a suspect LLM is derived from it.
Some approaches introduce watermarks during training by modifying model parameters, for example, via fine-tuning on proprietary knowledge~\cite{li2023turning, xu2024hufu} or by exploiting controlled information in quantization strategies~\cite{li2023watermarking}.
However, such methods incur substantial retraining costs and may degrade model performance.
Consequently, other works introduce watermarking at inference time by integrating it into the text generation process.
These approaches primarily operate by transforming latent representations~\cite{zhang2024remark, peng2023you}, altering decoding rules~\cite{dathathri2024scalable, kuditipudi2024robust}, or modifying token-level logits~\cite{kirchenbauer2023watermark, liusemantic2024semantic, park2026watermod}, and typically require additional post-processing for watermark verification. While watermarking provides a proactive defense, it requires intrusive modifications during the pre-training or fine-tuning stages of the model. In contrast, our approach focuses on lightweight post-hoc provenance analysis, without requiring any prior intervention in the model training process.

\textbf{Fingerprinting.} In contrast, fingerprint-based provenance approaches extract unique inherent identifiers from models without modification. Depending on whether they have access to model internals, they can be categorized into white-box and black-box methods. 
\textbf{White-box methods} analyze internal structural properties. For example, REEF~\cite{zhang2025reef} identifies derivations by calculating centered kernel alignment (CKA) on intermediate-layer embeddings, while others leverage gradient features~\cite{wu2025gradient} or attention parameter distributions~\cite{yoon2025intrinsic}. Other works~\cite{hu2025fingerprinting,shi2025knowledge}, utilize response logits, bypassing the need for direct weight access.
\textbf{Black-box methods} instead rely on input–output behaviors under controlled or adversarial prompts. These include measuring token-level or semantic similarity of LLM outputs over either random sequences or stylistic queries~\cite{Nikolic2025Model, wu2025llmdnatracingmodel, ren2025cotsrf}, or eliciting characteristic behaviors using carefully designed prompts~\cite{gubri2024trap, pasquini2025llmmap, xu-etal-2025-ctcc} and the training data from parent models~\cite{kuditipudi2025blackbox}.
However, despite their versatility, most existing provenance methods focus on pairwise  model detection with empirical thresholds~\cite{hu2025fingerprinting} or learned classifiers~\cite{wu2025llmdnatracingmodel}, or adopt a winner-takes-all strategy that selects the single most similar model from multiple candidates~\cite{Nikolic2025Model}.
These heuristic decision rules lack rigorous statistical guarantees, inevitably limiting their reliability in complex, high-stakes lineage tracing scenarios.
In this work, we formalize provenance analysis by introducing rigorous statistical guarantees with provable theoretical validity.

\section{Connection to Model Confidence Set}
\label{apd:mcs_connection}

The model confidence set (MCS) framework, proposed by~\citet{Hansen2011mcs}, provides a principled approach to identifying a set of models that are statistically indistinguishable from the best-performing model.
The key idea is to test whether all models have equal expected losses using a relative loss-based test statistic.
For any two models $f_i, f_j \in \mathcal{M}$, the pairwise sample loss for observation $t$ is defined as $d_{ij,t} = L_{i,t} - L_{j,t}$, and the corresponding relative mean loss is given by $\bar{d}_{ij} = \frac{1}{N} \sum_{t=1}^{N} d_{ij,t}$.
The MCS procedure then sequentially eliminates candidate models that perform significantly worse than others until no such model remains, yielding the selected model set.

We adapt this idea to provenance analysis: instead of selecting models with the best predictive performance, we select models that are most similar to the target. 
Our proposed framework inherits the robust statistical foundation of the MCS, particularly its adaptive uncertainty measure and relative comparison mechanisms. In this spirit, our method acknowledges the difficulty of distinguishing related models from irrelevant ones by adjusting the set size based on the available evidence. When there is a clear separation in similarity signals, MPS yields a small, precise set. 
However, when the similarities of true sources and irrelevant models severely overlap (e.g., multiple non-provenance candidates form the same family as the true parent), MPS tends to include more likely candidates in predicted set, ensuring that the true source is not excluded.

Despite the similarity in the core statistical machinery,  our approach introduces fundamental innovations tailored for provenance analysis. First, while MCS assumes the existence of a unique best reference model, provenance analysis does not assume such a reference and may involve zero or multiple ancestral models. This shift necessitates a different test statistic that can identify provenance existence without requiring a pre-defined ground truth.
Second, MCS selects models with the smallest loss (best performance), whereas our framework selects models with the smallest distance to the target (highest similarity). This shift ensures that the procedure captures structural or behavioral lineage rather than mere task proficiency.
Finally, instead of eliminating the worst models, our procedure leverages the fact that all irrelevant models share indistinguishable similarities and should remain in the pool as a baseline. We therefore iteratively exclude the most similar model from the ensemble for testing. This mechanism allows MPS to handle multiple sources effectively or return an empty set when no candidate stands out from this background distribution.

\section{Proofs}
\label{apd:proofs}





\subsection{Proof of Theorem~\ref{thm:pvalue}}
\label{prf:pvalue}

\begin{proof}
We first establish the asymptotic normality of each studentized statistic $t_i$.
Define $Z_{i,t} \equiv d_{i\cdot,t} = L_{i,t} - \frac{1}{M}\sum_{j=1}^M L_{j,t}$, where $t = 1, \ldots, N$ indexes the prompts and $M = |\mathcal{M}|$.
By condition (a), the prompts are sampled independently, so $\{Z_{i,t}\}_{t=1}^N$ are independent for each model $f_i$.
To apply the Lyapunov CLT~\citep{dudley2002convergence}, we verify the Lyapunov condition: by condition (b), $\mathbb{E}|Z_{i,t}|^{2+\delta} < \infty$ for some $\delta > 0$, which implies
\[
\frac{1}{s_N^{2+\delta}} \sum_{t=1}^N \mathbb{E}|Z_{i,t} - \mathbb{E}[Z_{i,t}]|^{2+\delta} \to 0 \quad \text{as } N \to \infty,
\]
where $s_N^2 = \sum_{t=1}^N \mathrm{Var}(Z_{i,t}) = N \sigma_i^2$ and $\sigma_i^2 = \mathrm{Var}(Z_{i,t})$.
Then, we leverage the Lyapunov CLT and get 
\[
\sqrt{N} \bar{d}_{i\cdot} = \sqrt{N} \cdot \frac{1}{N} \sum_{t=1}^N Z_{i,t} \xrightarrow{d} \mathcal{N}(0, \sigma_i^2).
\]
Condition (c) ensures $\sigma_i^2 > 0$.
By the law of large numbers, the sample variance satisfies $\widehat{\mathrm{Var}}(\bar{d}_{i\cdot}) \overset{p}{\to} \sigma_i^2 / N$.
Combining this with the asymptotic normality of $\sqrt{N} \bar{d}_{i\cdot}$ and applying Slutsky's theorem yields
\[
t_i = \frac{\bar{d}_{i\cdot}}{\sqrt{\widehat{\mathrm{Var}}(\bar{d}_{i\cdot})}} \xrightarrow{d} \mathcal{N}(0, 1).
\]

The joint distribution of the statistic vector follows from the multivariate CLT.
Define $\psi_i(x_t) = d_{i\cdot,t} / \sigma_i$, so that $t_i = \frac{1}{\sqrt{N}} \sum_{t=1}^N \psi_i(x_t) + o_p(1)$.
Let $\mathbf{t} = (t_1, \ldots, t_M)^T$ and $\boldsymbol{\Psi}(x_t) = (\psi_1(x_t), \ldots, \psi_M(x_t))^T$.
The vector of studentized statistics admits the asymptotic linear representation
\[
\mathbf{t} = \frac{1}{\sqrt{N}} \sum_{t=1}^N \boldsymbol{\Psi}(x_t) + o_p(1).
\]
By the multivariate CLT, $\mathbf{t} \xrightarrow{d} \mathcal{N}(\mathbf{0}, \Sigma)$ for some covariance matrix $\Sigma$.
Under the null hypothesis $\mathrm{H}_{0,\mathcal{M}}: \mu_1 = \cdots = \mu_M$, all models have equal expected distance, so $\mathbb{E}[\psi_i(x_t)] = 0$ for all $i$.
Since the null hypothesis treats all models symmetrically, the covariance matrix $\Sigma$ is invariant under permutations of model indices.
It follows that $(t_{\pi(1)}, \ldots, t_{\pi(M)}) \xrightarrow{d} \mathcal{N}(\mathbf{0}, \Sigma)$ for any permutation $\pi$, establishing asymptotic exchangeability.

Finally, we verify the validity of the permutation $p$-value.
By standard results on permutation tests with asymptotically exchangeable statistics~\citep{Joseph1990Behavior}, the permutation distribution consistently approximates the null distribution as $R \to \infty$.
Since $T_{\min} = \min_i t_i$ is a continuous function of $\mathbf{t}$ (the minimum over finitely many coordinates is Lipschitz continuous), the continuous mapping theorem yields $T_{\min} \xrightarrow{d} \min_i Z_i$ where $\mathbf{Z} \sim \mathcal{N}(\mathbf{0}, \Sigma)$.
Let $F_0$ denote the CDF of this limiting distribution.
From the probability integral transform, $F_0(T_{\min}) \sim \mathrm{Uniform}(0, 1)$ under the null, which implies $p_{\mathcal{M}} \sim \mathrm{Uniform}(0, 1)$ asymptotically.
Therefore,
\[
\lim_{N, R \to \infty} \Pr(p_{\mathcal{M}} \le \alpha) \le \alpha.
\]
This completes the proof.
\end{proof}

\subsection{Proof of Theorem~\ref{thm:coverage}}
\label{prf:coverage}

Recall from Definition~\ref{definition:tam} that the true provenance set $\mathcal{M}^*$ consists of all models truly related to the target.
We formalize this as the set of models with the smallest expected distance to the target model:
\begin{equation*}
\mathcal{M}^* = \left\{ f \in \mathcal{M} : \mu_f = \min_{f' \in \mathcal{M}} \mu_{f'} \right\},
\end{equation*}
where $\mu_f = \mathbb{E}[L_{f,t}]$ is the expected distance of model $f$ as defined in Eq.~\eqref{eq:loss}.

\begin{proof}[Proof]
Let $f \in \mathcal{M}^*$ be any true provenance model.
By definition, $\mu_f = \min_{f' \in \mathcal{M}} \mu_{f'}$.
At iteration $k$, let $\mathcal{M}^{(k)}$ denote the current candidate set.
If $f \in \mathcal{M}^{(k)}$ and $\mathcal{M}^{(k)} \not\subseteq \mathcal{M}^*$, then there exists $f' \in \mathcal{M}^{(k)}$ with $\mu_{f'} > \mu_f$.
Hence $\mathrm{H}_{0, \mathcal{M}^{(k)}}$ is false.
By Theorem~\ref{thm:pvalue},
\[
\lim_{N,R \to \infty} \Pr(p_{\mathcal{M}^{(k)}} \le \alpha \mid \mathrm{H}_{0, \mathcal{M}^{(k)}}) \le \alpha.
\]
Upon rejection, the algorithm removes $f^* = \arg\min_{f_i \in \mathcal{M}^{(k)}} t_i$.
Since $\mu_f < \mu_{f'}$ for all $f' \in \mathcal{M}^{(k)} \setminus \mathcal{M}^*$, we have $\Pr(t_f < t_{f'}) \to 1$ as $N \to \infty$.
Thus, $f^* \in \mathcal{M}^*$, and $f^*$ is added to $\hat{\mathcal{M}}$.

The procedure terminates when $p_{\mathcal{M}^{(k)}} > \alpha$, at which point $\mathcal{M}^{(k)} \cap \mathcal{M}^* = \emptyset$ with probability at least $1 - \alpha$.
Therefore,
\[
\lim_{N,R \to \infty} \Pr(\mathcal{M}^* \subseteq \hat{\mathcal{M}}) \ge 1 - \alpha.
\]
\end{proof}

\subsection{Proof of Theorem~\ref{thm:efficiency}}
\label{prf:efficiency}

\begin{proof}
By the gap condition, for all $f \in \mathcal{M}^*$ and $f' \in \mathcal{M} \setminus \mathcal{M}^*$,
\[
\mu_f < \mu_{f'} - \delta.
\]
By the law of large numbers, $\bar{d}_{i\cdot} \overset{p}{\to} \mu_i - \bar{\mu}$ as $N \to \infty$, where $\bar{\mu} = \frac{1}{M}\sum_{j=1}^M \mu_j$.
It follows that for any $f \in \mathcal{M}^*$ and $f' \in \mathcal{M} \setminus \mathcal{M}^*$,
\[
\bar{d}_{f\cdot} - \bar{d}_{f'\cdot} \overset{p}{\to} \mu_f - \mu_{f'} < -\delta < 0.
\]
Since the test statistic $t_i = \bar{d}_{i\cdot} / \sqrt{\widehat{\mathrm{Var}}(\bar{d}_{i\cdot})}$ and $\widehat{\mathrm{Var}}(\bar{d}_{i\cdot}) \overset{p}{\to} \sigma_i^2 / N > 0$, we have $t_f < t_{f'}$ with probability approaching $1$.
Therefore, at each iteration $k$ where $\mathcal{M}^{(k)} \not\subseteq \mathcal{M}^*$, the model $f^* = \arg\min_{f_i \in \mathcal{M}^{(k)}} t_i$ satisfies $f^* \in \mathcal{M}^*$ with probability approaching $1$.
The algorithm terminates when $\mathcal{M}^{(k)} \subseteq \mathcal{M} \setminus \mathcal{M}^*$, at which point all models in $\mathcal{M}^*$ have been selected into $\hat{\mathcal{M}}$.

For the convergence rate, by Hoeffding's inequality, for bounded distance $L_{i,t} \in [0, 1]$,
\[
\Pr\left( |\bar{d}_{f\cdot} - \bar{d}_{f'\cdot} - (\mu_f - \mu_{f'})| > \delta/2 \right) \le 2\exp(-c N \delta^2)
\]
for some constant $c > 0$.
Since $\mu_f - \mu_{f'} < -\delta$, we have $\bar{d}_{f\cdot} - \bar{d}_{f'\cdot} < -\delta/2 < 0$ with probability at least $1 - 2\exp(-c N \delta^2)$.
Applying a union bound over at most $M^2$ pairs yields $\Pr(\hat{\mathcal{M}} = \mathcal{M}^*) \ge 1 - O(\exp(-c N \delta^2))$.
\end{proof}

\include*{Sections/related_work}





\section{Models and Benchmark}
\label{apd:benchmark} 
\paragraph{Model collection.} We collect model candidates for all provenance pairs from the Hugging Face (HF) platform by using their default crawling APIs~\cite{HuggingFace2023}.
To ensure coverage of widely used variants, we select the most popular models by download count, restricting model sizes to under 3B parameters and focusing on those compatible with PyTorch and Transformers.
Following prior works~\cite{Nikolic2025Model,wu2025llmdnatracingmodel}, we focus on the context of the proliferation of fine-tuned model derivations. Specifically, we consider the most reliable ground truth to be cases where LLMs explicitly specify their base model on the HF platform via the ``\texttt{base\_model:finetune:$<$base\_model\_name$>$}'' field in the model description, which covers common fine-tuning paradigms including full fine-tuning, knowledge distillation, and reinforcement learning from human feedback (RLHF). We exclude LLMs without this explicit indication and remove models that do not support English as indicated in their model cards.
Finally, we collect 455 LLMs spanning three derivation depths, comprising 383, 118, and 13 derivation chains of length 1, 2, and 3, respectively.
Note that a single long chain is decomposed into multiple overlapping segments of varying depths.
Figure~\ref{fig:bench_tree} shows examples of derivation relationships across three rounds of fine-tuning. Moreover, we list the top-20 ancestor models with the most derived models in Table~\ref{tab:models details}.

\paragraph{Test instance construction.} We construct test instances $(\mathcal{M},g)$ with varying candidate set sizes. To control the number of true provenance models in $\mathcal{M}$, we simply extend the established derivation chains to construct candidate pools with desired $\mathrm{TAM}$ counts. For example, to construct a test instance with $|\mathcal{M}|=10$ under $\mathrm{TAM}\!=\!2$ , we consider a derivation chain with 2 provenance models: $\mathrm{grandparent \ model} \to \mathrm{parent \ model}  \to g$, and we then augment the set with 8 additional models sampled from the remaining 455 models that are unrelated to $g$.

\begin{table}[t]
\caption{Top 20 pre-trained LLMs from the benchmark}
\label{tab:models details}
\resizebox{\textwidth}{!}{
\begin{tabular}{@{}l|lllc@{}}
\toprule
Hugging Face Model                        & Main Domain                        & \#Parameters (dtype) & License          & \#Derived Models \\ \midrule
Qwen/Qwen2.5-1.5B-Instruct                & General LLM / Instruct             & 1,543,714,304 (BF16) & Apache-2.0       & 28                          \\
deepseek-ai/DeepSeek-R1-Distill-Qwen-1.5B & Reasoning / Math                   & 1,777,088,000 (BF16) & MIT              & 21                          \\
meta-llama/Llama-3.2-1B-Instruct          & General LLM / Instruct             & 1,235,814,400 (BF16) & Llama3.2         & 17                          \\
google/gemma-2-2b                         & General LLM (Base)                 & 2,614,341,888 (F32)  & Gemma            & 15                          \\
google/gemma-2b                           & General LLM (Base)                 & 2,506,172,416 (BF16) & Gemma            & 14                          \\
Qwen/Qwen2.5-1.5B                         & General LLM (Base)                 & 1,543,714,304 (BF16) & Apache-2.0       & 13                          \\
openai-community/gpt2                     & General LLM (Base)                 & 137,022,720 (F32)    & MIT              & 12                          \\
Qwen/Qwen2.5-0.5B-Instruct                & General LLM / Instruct             & 494,032,768 (BF16)   & Apache-2.0       & 12                          \\
Qwen/Qwen2-1.5B                           & General LLM (Base)                 & 1,543,714,304 (BF16) & Apache-2.0       & 11                          \\
TinyLlama/TinyLlama-1.1B-Chat-v1.0        & General LLM / Chat                 & 1,100,048,384 (BF16) & Apache-2.0       & 11                          \\
Qwen/Qwen3-0.6B                           & General LLM (Base)                 & 751,632,384 (BF16)   & Apache-2.0       & 11                          \\
microsoft/Phi-3-mini-4k-instruct          & General LLM / Reasoning / Instruct & 3,821,079,552 (BF16) & MIT              & 10                          \\
deepseek-ai/deepseek-coder-1.3b-base      & Code LLM                           & 1,346,471,936 (BF16) & Deepseek & 10                          \\
Qwen/Qwen3-1.7B                           & General LLM (Base)                 & 2,031,739,904 (BF16) & Apache-2.0       & 9                           \\
microsoft/phi-2                           & General LLM / Reasoning            & 2,779,683,840 (F16)  & MIT              & 9                           \\
HuggingFaceTB/SmolLM2-135M                & General LLM (Small / Efficient)    & 134,515,008 (BF16)   & Apache-2.0       & 8                           \\
meta-llama/Llama-3.2-3B-Instruct          & General LLM / Instruct             & 3,212,749,824 (BF16) & Llama3.2         & 8                           \\
Qwen/Qwen3-0.6B-Base                      & General LLM (Base)                 & 596,049,920 (BF16)   & Apache-2.0       & 7                           \\
Qwen/Qwen2.5-Math-1.5B                    & Math / Reasoning LLM               & 1,543,714,304 (BF16) & Apache-2.0       & 7                           \\
google/gemma-3-1b-it                      & General LLM / Instruct             & 999,885,952 (BF16)   & Gemma            & 6                           \\ 
\ldots & \ldots           &    &        &                          \\ 
\end{tabular} }
\end{table}


\begin{figure}[t]
    \centering
    \includegraphics[width=0.9\linewidth]{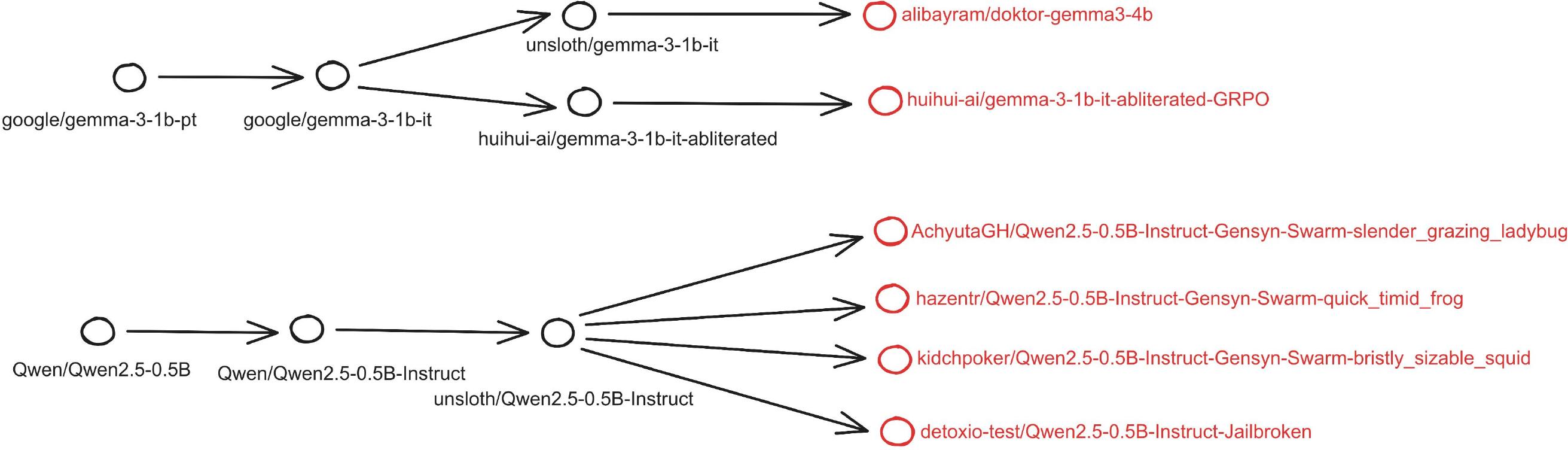}
    \caption{Examples of derivation chains with $\mathrm{TAM}\!=\!3$. Red nodes means the final child LLM versions after multiple fine-tuning.}
    \label{fig:bench_tree}
\end{figure}

\section{Implementation Details of MPS Procedure }
\label{apd:prompt_con}
\subsection{Distance Functions}
We adopt two simple yet effective metrics to measure pairwise model relationships. 
Both metrics require only query access to LLMs and are computationally efficient, making them suitable for large-scale provenance verification.

\paragraph{Next-token distance.}
The first metric is next-token similarity~\cite{Nikolic2025Model}, which evaluates whether two LLMs produce identical next tokens given the same prompt.
The authors reveal that fine-tuned derivative models are more likely to generate similar next-token outputs than unrelated models.
To incorporate this metric into MPS, we convert it into a distance measure by defining
\[
    L_{(f,g)_x} = 1 - \mathbf{1}\!\left(f(x), g(x)\right),
\]

where $f(x)$ and $g(x)$ denote the predicted next tokens of LLM-$f$ and LLM-$g$ on prompt $x$, respectively.

\paragraph{Response semantic distance.}
The second metric is response embedding similarity~\cite{wu2025llmdnatracingmodel}, which compares the semantic representations of model outputs generated from the same prompt.
Similarly, we adapt this metric to MPS by defining the corresponding distance as
\[
    L_{(f,g)_x} = 1 - \cos\!\left(f(x), g(x)\right),
\]
where $f(x)$ and $g(x)$ are the embeddings of the generated outputs from LLM-$f$ and LLM-$g$, respectively.

\paragraph{Usage in MPS procedure.}
For the next-token distance, given a prompt, we query each LLM to generate the next token taking the prompt as prefix input and compute the token-level discrepancy score accordingly.
For the response semantic distance, following~\citeauthor{wu2025llmdnatracingmodel}, we encode the generated one token of each prompt using Qwen3-Embedding-8B and use the first token embedding ($[\text{CLS}]$) from the final layer as the output representation. 

\subsection{Prompt Construction}
To produce diverse prompts for our provenance, we use the following ten powerful LLMs: 
\texttt{claude-3.5-sonnet}, \texttt{gpt-4o-mini}, \texttt{gemini-2.5-flash}, \texttt{kimi-dev-72b}, \texttt{grok-4-fast}, \texttt{z-ai/glm-4.6}, \texttt{qwen/qwen-plus-2025-07-28}, \texttt{meta-llama/llama-3.3-70b-instruct}, \texttt{deepseek/deepseek-chat-v3-0324}, and  \texttt{nousresearch/hermes-4-70b}.
We ask them to generate the short sequences with sufficient information(\textit{high-entropy}) as prompts, avoiding predictable continuations. 
For example, given a prefix such as “The first letter of the English alphabet is”, the next token (``A'') is almost deterministic under the language model. We interact with these LLMs via \texttt{OpenRouter API} using the instructions illustrated in Figure~\ref{fig:prompt_gen}. We generated 114,543 incomplete sentences, consuming 22.14\$ in total. By filtering out those whose word length is too short ($< 5$) or too long ($>20$), and then collecting 100,400 prompts.
\begin{figure}
    \centering
    \includegraphics[width=0.7\linewidth]{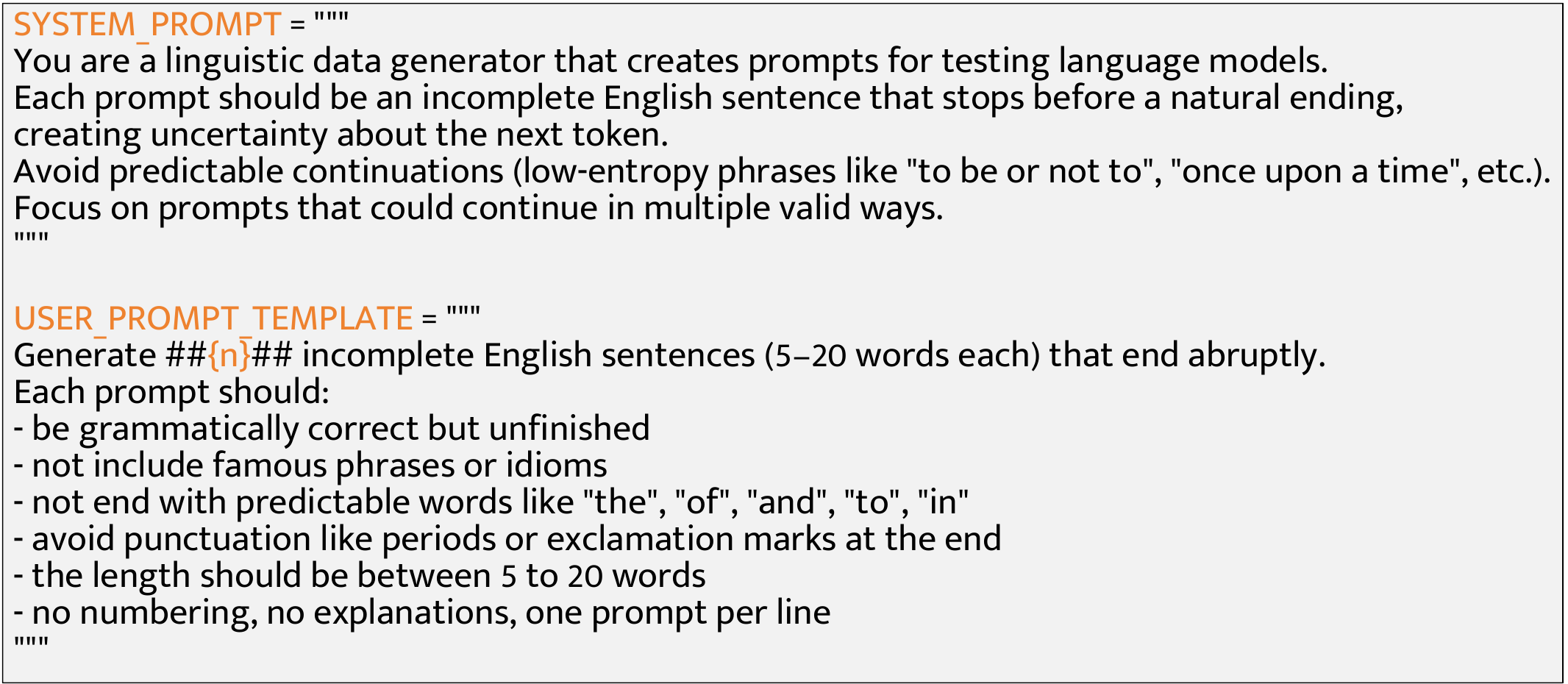}
    \caption{Instruction template for prompt generation}
    \label{fig:prompt_gen}
\end{figure}

\subsection{Experimental Environment} 
We run our model provenance testers on a Linux machine with 64-bit Ubuntu 22.04.5 LTS, 256GB RAM and $2\times$ 16-core AMD EPYC 9124 Processor @1.50 GHz and $8\times$ NVIDIA 4090 GPUs with 20GB RAM. All experiments are implemented using Python 3.10, CUDA 11.8, PyTorch 2.6 and Transformers 4.57.1 library.

\section{Extended Results of MPS}

\subsection{More Results of MPS on Specific $\alpha$}
\label{apd:main_res_alpha}

\begin{figure}[t]
    \centering
    \begin{subfigure}{\linewidth}
        \centering
        \includegraphics[width=\linewidth]{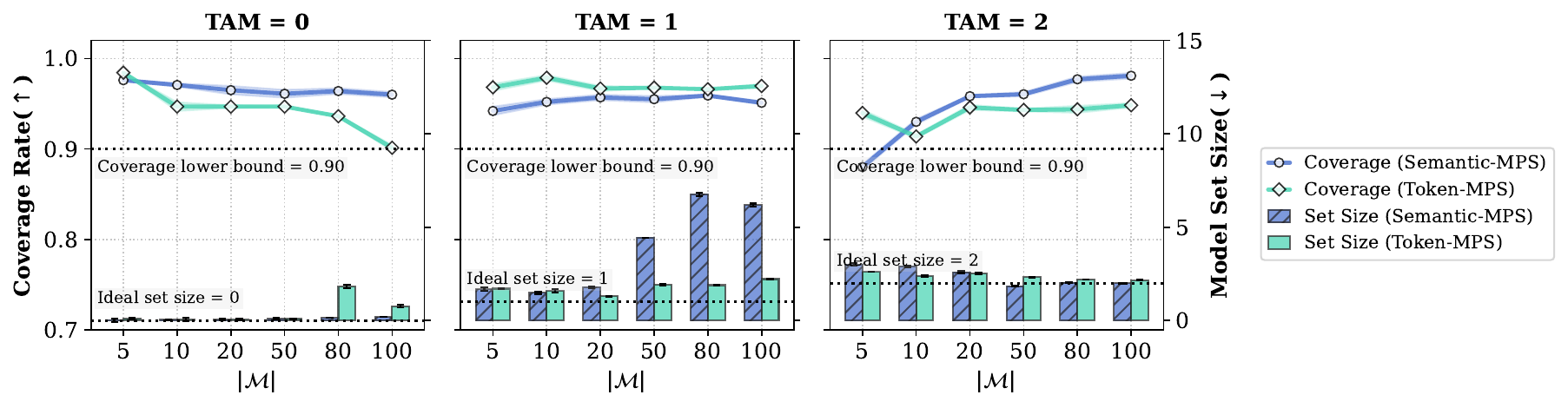}
    \end{subfigure}

    \begin{subfigure}{\linewidth}
        \centering
        \includegraphics[width=\linewidth]{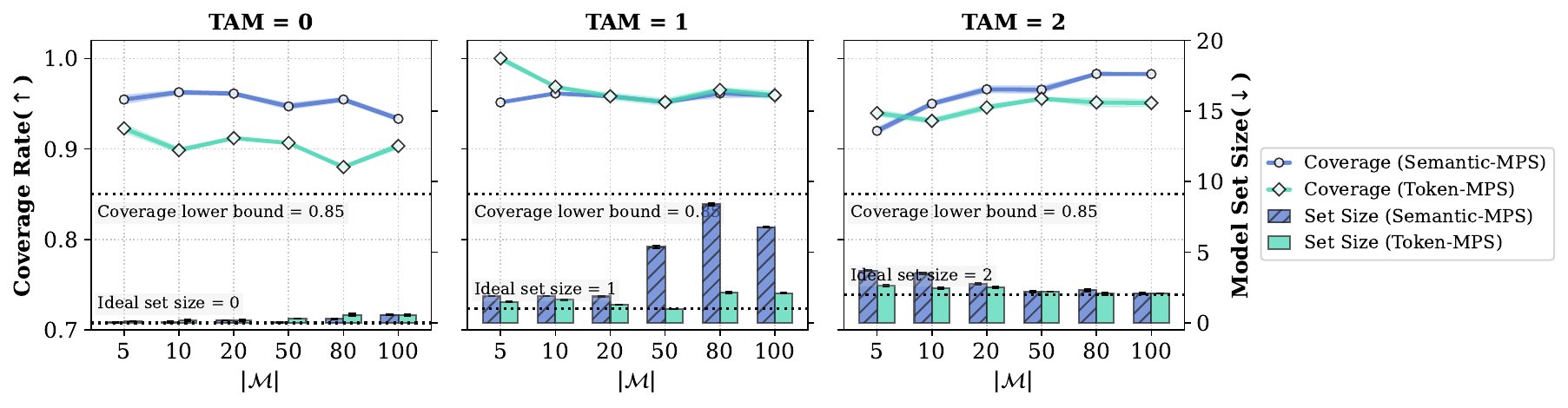}
    \end{subfigure}

     \caption{Averaged coverage and set size of MPS procedure with different score functions with $\alpha=0.1$ (Top) and $\alpha=0.15$ (Bottom). $|\mathcal{M}|$ denotes the candidate set size; $\mathrm{TAM}$ is the number of true ancestor models.}
    \label{fig:main_res_0.1}
\end{figure}

This subsection supplements a broader results of the MPS procedure under $\alpha\in \{0.1,0.15\}$ across different score functions and candidate sizes $|\mathcal{M}|\in \{5,10,20,50,80,100 \}$. 

As shown in the Figure~\ref{fig:main_res_0.1}, the results indicate that the MPS consistently achieves a coverage rate above the predefined lower bound. Notably, across different distance scoring functions, the method maintains a compact predicted set size that remains close to the ground-truth size and is substantially smaller than the full candidate set. For instance, in the case of Semantic-MPS under $\alpha=0.15$ with true 
$\mathrm{TAM}\!=\!1$ and a candidate size $|\mathcal{M}|=100$, the returned set size is consistently below 10, which is far less than the full candidate pool.

In summary, \textbf{MPS consistently preserves prescribed coverage guarantees while returning a compact subset of candidate models}, suggesting its practical utility for large-scale provenance auditing scenarios in enabling manual or secondary verification to be concentrated on a small number of potential provenance models.




\subsection{Analysis of Similarity Distributions}
\label{apd:score distribution analysis}

Comparing distance functions in Figures~\ref{fig:main_res} and Figure~\ref{fig:main_res_0.1}, we observe a performance crossover: while Token-MPS is competitive at shallow lineages ($TAM\!$ = 0, 1), Semantic-MPS demonstrates superior robustness as the lineage depth reaches $TAM\!=\!2$. \textbf{To investigate the underlying cause of this performance shift}, we analyze the similarity distributions across varying lineage depths. We sample 100 unrelated and lineage-related model pairs from $TAM\!=\!2$ derivation chains (e.g., $A \rightarrow B \rightarrow C$, where $B$ and $A$ represent one-hop and two-hop ancestors of $C$).

Figure~\ref{fig:score_dis} illustrates the comparative similarity distributions for Token and Semantic metrics, yielding several key insights:
(i) similarity decays monotonically with derivation depth, with models being most similar to their immediate ancestors and progressively less similar to more distant ones; (ii) despite this decay, lineage-related models remain separable from unrelated models within two generations; and (iii)
Token similarity provides clearer separation between directly related and unrelated pairs at shallow derivation depths, whereas cosine-based semantic similarity remains discriminative under deeper derivation chains, i.e., two-hop similarity exhibits limited overlap with unrelated model pairs, as token-level features drift due to repeated fine-tuning while semantic signals are better preserved.

Overall, the observed performance shift stems from token-level features drifting significantly during repeated fine-tuning, while semantic signals remain more stable. This motivates the development of robust scoring functions that can better capture underlying lineage continuity in complex provenance settings.

\begin{figure}
    \centering
    \begin{subfigure}{0.3\textwidth}
     \centering
        \includegraphics[width=\linewidth]{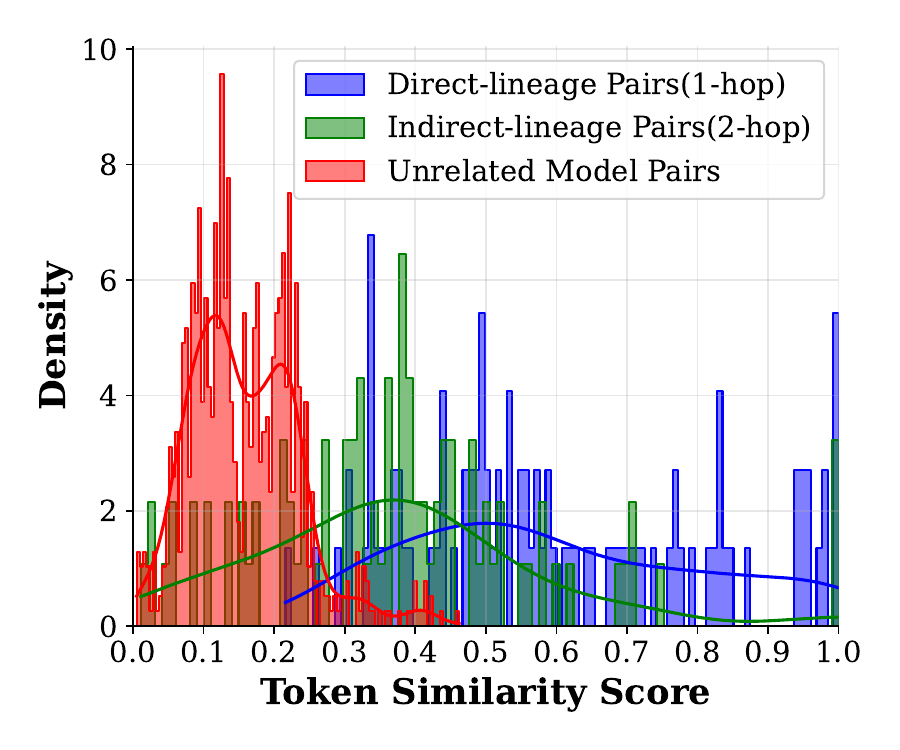}
    \end{subfigure}
    \begin{subfigure}{0.3\textwidth}
      \centering
    \includegraphics[width=\linewidth]{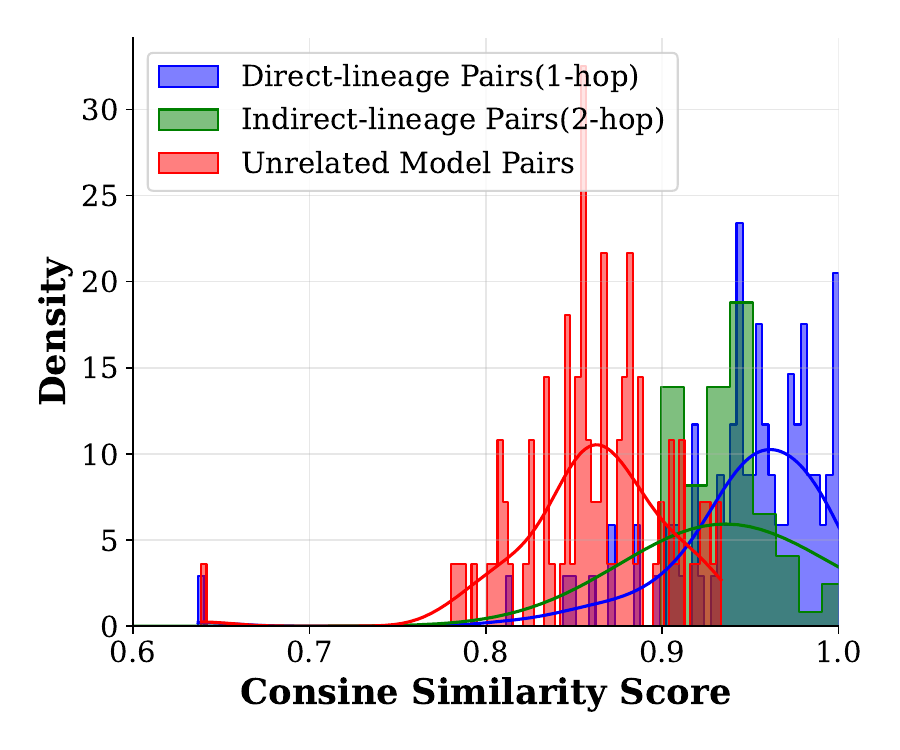}
     \end{subfigure}
     
  \captionof{figure}{Similarity score distributions for unrelated pairs (red) and lineage-related pairs at different derivation distances. \textit{1-hop} refers to parent–child models (blue), and \textit{2-hop} indicates grandparent–grandchild (green).}
\label{fig:score_dis}
\end{figure}

\subsection{Effect of Significance Level on Semantic-MPS}
\label{apd:semantic_alpha_ana}
Figure~\ref{fig:semtic_alpha} shows the results of Semantic-MPS under different significance levels. It can be observed that as $\alpha$ increases, its coverage rate consistently remains above the $1-\alpha$ boundary. Although the increase in $\alpha$ leads to an expected expansion of the prediction set, the sizes remain within acceptable levels. For example, even when $|\mathcal{M}|=100$, the set size does not exceed 10.
Overall, the behaviour of MPS is statistically valid and aligns strictly with the theoretical implications of $\alpha$.

\begin{figure}
    \centering
    \includegraphics[width=\linewidth]{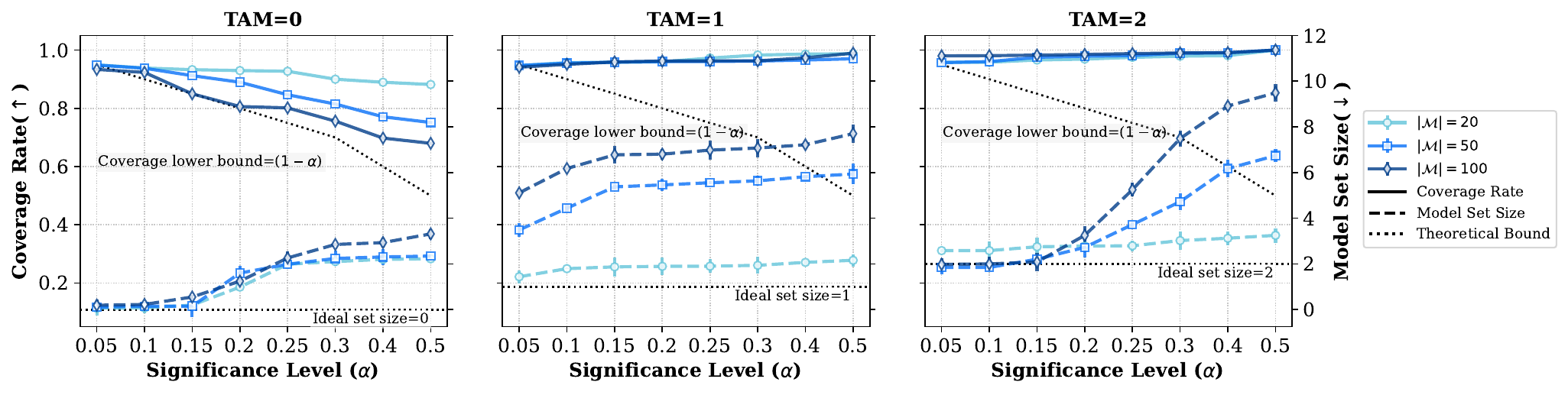}
    \caption{Average coverage rate and set size of Semantic-MPS with varying significance level $\alpha$. $|\mathcal{M}|$ denotes the candidate set size; $\mathrm{TAM}$ is the number of true ancestor models.}
    \label{fig:semtic_alpha}
\end{figure}

\section{Supplement of Empirical Applications}
\subsection{Baselines}
\label{apd: detail_pairwise_detection}


We compare MPS with the following provenance methods including two white-box approaches, i.e., REEF and LLMPrint and two black-box techniques, i.e., LLMDNA and FWER.

\textbf{REEF}~\cite{zhang2025reef} extracts intermediate hidden representations as model fingerprints and computes pairwise similarity using Centered Kernel Alignment (CKA). Following the default setting, we use the TruthfulQA dataset to extract the embeddings at layer 18 and then calculate the linear CKA score between two LLMs. As REEF does not perform provenance classification based on the CKA score, we simply adopt a threshold-based strategy based on empirical observations on the CKA scores over 281 unrelated and directly related pairs from our benchmark (excluding the overlapping pairs from Bench-A), where pairs with a score greater than 0.65 are classified as provenance.

\textbf{LLMPrint}~\cite{hu2025fingerprinting} optimizes the fingerprint triggers by the Greedy Coordinate Gradient (GCG) algorithm~\cite{zou2023universal} and computes a token preferences similarity by analyzing the logits of the first generated tokens between two LLMs. Following LLMPrint, we construct 300 fingerprint prompts per base LLM on the given token pairs, and use the fixed prefix “Randomly output a word from your vocabulary” followed by a suffix $s_{init}$ with 20 placeholders (``x'') for optimization. We run the GCG algorithm for 1,000 iterations to optimize a fingerprint prompt. Subsequently, provenance relationships are determined using the gray-box thresholding strategy proposed in LLMPrint with $z=1.64$.

\textbf{LLMDNA}~\cite{wu2025llmdnatracingmodel} extracts semantic representations for each LLM response and then trains a support vector machine (SVM) with an RBF kernel to classify the model pairs. We join the generated tokens per LLM into a single sequence by the whitespace and then use the Qwen3-Embedding-8B to encode them. During the training, we use $\mathrm{TAM}\!=\!0$ and $1$ to construct 281 negative and positive training pairs. Given a pairwise embedding ($\textbf{e}_f,\textbf{e}_g$), we concatenate them as the input representation $\textbf{z}$, i.e., $\textbf{z}=\text{concat}(\textbf{e}_f,\textbf{e}_g,|\textbf{e}_f-\textbf{e}_g|)$. To determine the hyper-parameters $C$ and $\gamma$ for the SVM, we perform a grid search over
$C \in [1, 10, 50, 100]$ and $\gamma \in [10^{-1}, 10^{-2}, 10^{-3}, 10^{-4}, 10^{-5}]$.
Based on this search, we select $C = 10$ and $\gamma = 10^{-3}$, while all other hyper-parameters are set to their default values.

\textbf{FWER}~\cite{Nikolic2025Model} employs Z-tests under family-wise error rate (FWER) control to determine whether a suspect parent model exhibits statistically significant token-level similarity to a target model, relative to a set of reference models that are known a priori to be unrelated to the target. For a set of parent candidates, FWER outputs the candidate most similar to the target model (if it passes the significance test) or returns null. We extend this by iteratively evaluating each candidate to construct a prediction set, maintaining a significance level of $\alpha\!=\!0.05$ and utilizing the same prompt set as MPS.

For each multi-candidate test instance $(\mathcal{M},g)$, we apply each baseline to iteratively determine the relationship of each candidate-target pair $(m,g)$ with $m\in \mathcal{M}$, and aggregate all candidates classified as provenance into the provenance set.

\begin{table}[]
\centering
\small
\caption{Averaged Performance for pairwise model provenance detection over three runs(mean $\pm$ std). The best results are bolded.} 
\label{tab:pairwise_mpt}
\begin{tabular}{@{}llll@{}}
\toprule
Method & Precision$\uparrow$ & Recall$\uparrow$   & Accuracy$\uparrow$ \\ \midrule
REEF          &  0.89 ± 0.11        &  0.83 ± 0.08       &  0.85 ± 0.03        \\ 
LLMPrint      &  0.92 ± 0.03        &  \textbf{0.97 ± 0.02}     &  0.95 ± 0.02            \\
LLMDNA        &  0.91  ± 0.18       &  0.95 ± 0.10        &  0.93 ± 0.11         \\
FWER          & 0.97 ± 0.01           & 0.89 ± 0.01         & 0.93 ± 0.04         \\
\midrule
Token-MPS     & 0.99 ± 0.00          & 0.92 ± 0.00         & 0.95 ± 0.00         \\
Semantic-MPS  & \textbf{0.99 ± 0.01}  & {0.96 ± 0.00} & \textbf{0.97 ± 0.00} \\ \bottomrule
\end{tabular}
\end{table}

\subsection{Model Provenance Detection}
\label{apd:pairwise_dectection}
In this subsection, we study pairwise provenance verification, i.e., determining whether a given pair of models exhibits a provenance relationship, which has been the primary focus of prior work~\cite{hu2025fingerprinting,wu2025llmdnatracingmodel}.
Although MPS is designed for set-valued inference, this can reduce to the case where the candidate set contains a single model ($|\mathcal{M}|=1$). Since MPS relies on relative distances across multiple candidates, we augment the candidate pool with known unrelated control models, following prior work~\cite{Nikolic2025Model}. 
MPS implements pairwise provenance inference by holding the control models in the candidate set during the exclusion phase and deciding whether to exclude the suspected parent model. Returning the suspected parent model indicates provenance; otherwise, non-provenance is indicated.

\textbf{Setup.} 
We utilize the Bench-A pairwise provenance dataset, which includes 10 hold-out control models to serve as independent references.
 We compare MPS with existing pairwise provenance detection methods, including white-box approaches such as REEF~\cite{zhang2025reef} and LLMPrint~\cite{hu2025fingerprinting}, as well as black-box techniques such as LLMDNA~\cite{wu2025llmdnatracingmodel} and FWER~\cite{Nikolic2025Model}. 
We report standard binary classification metrics, including Precision, Recall, and Accuracy.

\textbf{Results.} The experimental results on Bench-A are presented in Table~\ref{tab:pairwise_mpt}. Overall, MPS-based methods perform well in black-box settings, remaining competitive with white-box approaches that require internal access and carefully engineered features. Specifically, Semantic-MPS achieves the best overall performance, while Token-MPS also demonstrates strong and stable results, with all metrics exceeding 92\%. 
Among the baselines, FWER is the closest statistically grounded comparator. 
However, it performs pairwise hypothesis tests by sequentially comparing the similarity between the target and a suspect model against that of control models. This makes it difficult to distinguish weak provenance signals from control models,  resulting in reduced recall (e.g., 89\%).
By contrast, MPS jointly analyzes relative deviations among all candidates, allowing it to better distinguish true provenance models under marginal distance differences.





\end{document}